\documentclass[11pt,a4paper]{article}
\usepackage[margin=25mm]{geometry}
\usepackage[T1]{fontenc}
\usepackage[utf8]{inputenc}
\usepackage{lmodern,microtype}
\usepackage{amsmath,amssymb,mathtools,amsthm}
\usepackage{booktabs,tabularx,array,longtable}
\usepackage{graphicx,caption,float}
\usepackage[section]{placeins}
\usepackage[numbers,sort&compress]{natbib}
\usepackage[hidelinks]{hyperref}
\usepackage[nameinlink,noabbrev]{cleveref}
\usepackage{fancyhdr}
\newcolumntype{Y}{>{\raggedright\arraybackslash}X}
\newcolumntype{L}[1]{>{\raggedright\arraybackslash}p{#1}}

\hypersetup{
 pdftitle={Many Perturbations, Shared Failure Dynamics: Tracing Direction-Sensitive Perturbation Influence in Language Models},
 pdfauthor={Zhibo Hu, Chen Wang, Yanfeng Shu, Hye-young Paik, Liming Zhu},
 pdfsubject={Revised research draft; 23 October 2026}
}
\newtheorem{proposition}{Proposition}
\newcommand{\Din}{D_{\mathrm{in}}}
\newcommand{\Dout}{D_{\mathrm{out}}}
\newcommand{\Gw}{G_{W,\epsilon}}
\newcommand{\Gz}{G_{W,0}}
\newcommand{\norm}[1]{\left\lVert #1\right\rVert_2}
\newcommand{\ip}[2]{\left\langle #1,#2\right\rangle}

\newcommand{\E}{\mathbb{E}}
\newcommand{\trans}{\mathrm{trans}}
\newcommand{\longi}{\mathrm{long}}
\newcommand{\sema}{\mathrm{semantic}}
\newcommand{\panel}[1]{\paragraph{#1}}
\graphicspath{{figures/}}
\title{\textbf{Many Perturbations, Shared Failure Dynamics:}\\Tracing Direction-Sensitive Perturbation Influence\\in Language Models}
\author{%
\begin{minipage}{\dimexpr\textwidth-2\tabcolsep\relax}
\centering
{\normalsize\bfseries
Zhibo Hu\textsuperscript{1,2}\quad
Chen Wang\textsuperscript{1,2}\quad
Yanfeng Shu\textsuperscript{2}\quad
Liming Dong\textsuperscript{2}\quad
Hye-young Paik\textsuperscript{1}\quad
Liming Zhu\textsuperscript{1,2}}\\[6pt]
{\normalsize\textsuperscript{1}The University of New South Wales\qquad
\textsuperscript{2}CSIRO technology}\\[2pt]
{\normalsize Australia}\\[6pt]
{\footnotesize\ttfamily
\{zhibo.hu, h.paik\}@unsw.edu.au\quad
\{Chen.Wang, Yanfeng.Shu, Liming.Dong, Liming.Zhu\}@csiro.au}
\end{minipage}%
}
\date{}
\begin{document}
\maketitle
\begin{abstract}
Why can a task-preserving rewrite turn a correct answer into a failure, while a larger internal change leaves the answer intact? We develop a direction-sensitive, readout-relative account of semantic fragility. Our central proposal is that heterogeneous language changes can share geometric and readout relationships even when their behaviorally consequential directions depend on the input. Clean decision reserve, the transformation of an incoming representation difference, and its alignment with a specified response provide complementary descriptions of this process.

An exact longitudinal--transverse decomposition exposes the directional information discarded by scalar transmission. A compact reserve-and-transmission model predicts failures across held-out perturbation families and items in Qwen3-32B and Gemma 4 31B, and transfers to ReClor. Matched interventions then show that orientation carries causal information beyond those scalar summaries. Attention-component experiments on disjoint original items, followed by answer-order controls, further support an advantage for the natural paired direction over matched random and same-answer-letter donor directions. Crucially, the same rewrite rarely fails consistently across option orders, and effective current-order corrections use a newly computed clean reference. Shared predictive regularity therefore coexists with context-dependent directional effects.

This account also distinguishes a localized intervention from the natural input change it probes. Path integration improves the description of a finite patch within a fixed context, while a substantial discrepancy from the full rewrite effect remains. The complementary fixed-CoT cue study extends the readout-relative perspective to influence not acknowledged in recorded reasoning: its readout association transfers across domains, although the positive-mean causal criterion fails and signed-effect prediction remains post hoc.

Together, these results locate the commonality across linguistic perturbations in relationships among reserve, transformation, and response. They support a framework for studying shared fragility without assuming an input-independent failure direction or equating local causal sensitivity with a complete explanation of natural failure.
\end{abstract}

\section{Introduction}
\label{sec:intro}
External changes to a prompt can alter a language model's internal computation without making their consequences immediately visible. A task-preserving rewrite may leave an answer unchanged or turn a correct answer into an incorrect one. An external cue may likewise affect a readout even when the recorded chain of thought does not acknowledge it. Linguistic robustness and chain-of-thought (CoT) monitorability usually study these settings separately \citep{wang2024rupbench,fu2025same,wang2026monitorbench}. We ask a common question: \emph{what determines whether an input-induced representation difference becomes consequential for a specified behavioral readout?}

Representation change alone does not answer this question. Two rewrites can produce large hidden-state displacements while only one changes the answer. Likewise, the absence of cue acknowledgement in written reasoning does not establish the absence of an internal effect. We therefore distinguish the size of an input-induced difference, its transformation through an internal window, and its relation to the response being measured. This distinction permits shared measurements without treating an answer error and an undisclosed cue influence as the same outcome.

\begin{figure}[htbp]
\centering
\includegraphics[width=0.96\linewidth]{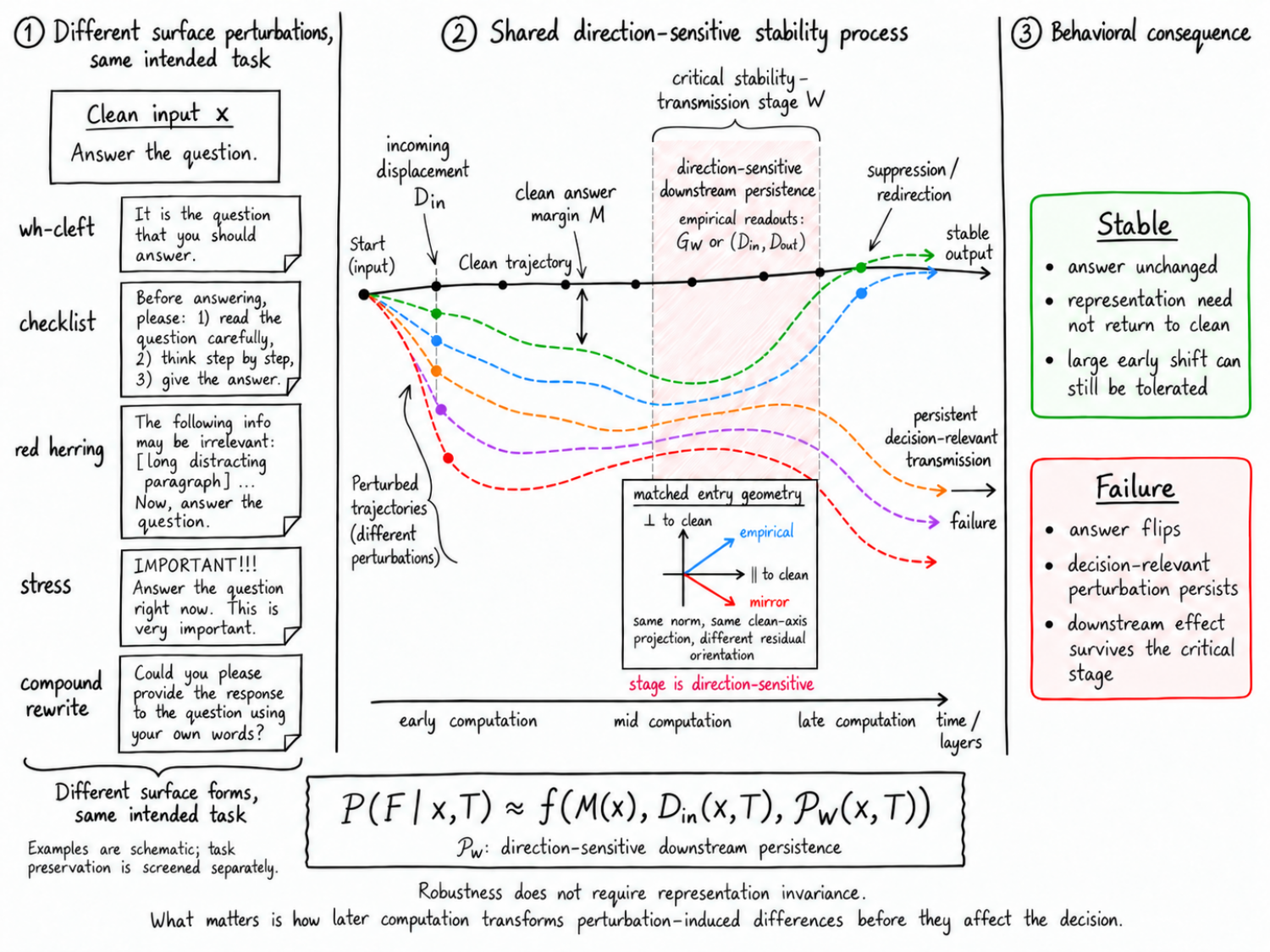}
\caption{Conceptual starting point: heterogeneous rewrites induce different internal trajectories through a model-specific window. The experiments refine the initial transmission account by testing orientation, readout alignment, and dependence on the current answer layout. Contraction is descriptive rather than a sufficient condition for stability; neither a universal direction nor a deterministic robustness certificate is assumed.}
\label{fig:conceptual}
\end{figure}

We begin with semantic fragility. On six RUPBench families paired across the same MMLU questions \citep{hendrycks2021mmlu,wang2024rupbench}, a compact description based on clean answer margin, incoming representation displacement, and local displacement gain predicts correct-to-incorrect changes across held-out families and items. It reaches AUROC 0.953 for Qwen3-32B and 0.910 for Gemma 4 31B \citep{yang2025qwen3,gemmateam2026gemma4}; frozen scorers retain useful ranking performance on independently constructed ReClor rewrites \citep{yu2020reclor}. The shared predictive relationship concerns decision reserve and transmission, not a shared surface trigger or hidden-state vector. Historical window selection also remains distinct from held-out score fitting.

Prediction does not establish scalar transmission as a sufficient mechanism. For paired differences $\delta_a$ and $\delta_b$ at the endpoints of a window, write
\begin{equation}
 \delta_b=(1+q_W)\delta_a+\rho_W\Din\hat r_W,
 \qquad \Din=\norm{\delta_a},\quad \hat r_W\perp\delta_a.
 \label{eq:intro-decomp}
\end{equation}
Here $q_W$ describes longitudinal change, $\rho_W$ transverse magnitude, and $\hat r_W$ transverse orientation. This exact identity exposes information discarded by scalar gain. In Qwen, failure-associated excess squared gain is largely transverse, and exchanging transverse orientation can alter behavior while preserving the receiver's scalar decomposition. A small fixed-rule ReClor replication retains that effect. In Gemma, entry interventions alter the naturally computed downstream state, while mirror controls with equal update norm, clean-axis projection, and resulting entry distance evolve differently. These experiments establish sensitivity to direction under their respective controls; they do not identify a universal restoring direction.

The new experiments pursue two gaps left by these results. First, a finite patch in one input's context is not the same operation as changing the input. On the original ReClor cohort, numerical path integration better describes the measured finite patch than a single-point gradient, but does not eliminate a larger discrepancy associated with the rest of the input-dependent computation. A quantitative local effect therefore cannot by itself explain the complete natural rewrite effect. Second, a promising local component must be evaluated with its selection history intact. An initial held-family component validation gives no clear all-cohort benefit. Exploratory failure-conditioned diagnostics then motivate a revised rule, which is frozen before inference on 250 disjoint original ReClor items. The rule targets the final-token output of Qwen's L54 attention component and strictly repairs 13/27 failures, compared with 0/108 matched random-direction draws. This new-item result supports a localized directional effect after the refinement; it does not replace the preceding negative all-cohort result.

A further experiment asks whether the paired direction's advantage can be reduced to answer-letter information or intervention size. We reuse the new cohort under four cyclic option orders and compare each current-order clean-minus-rewrite direction with random directions and cross-family discovery donors having the same gold answer letter. Controls match the update norm and resulting attention-component norm, rather than the full window's scalar decomposition. In the common-clean cohort, 37 failure conditions arise from 32 rewrites and 23 original items. The paired direction exceeds both control means in every failure condition, and strictly repairs 16/37, versus 10/148 same-letter donor and 0/148 random draws. However, 28/32 affected rewrites fail in only one order and none in all four. Every order also uses its own clean reference. The warranted conclusion is thus \emph{input- and readout-context-specific directional information beyond the tested local geometry and donor controls}; unchanged-vector transfer across answer layouts remains untested.

Direction must also be interpreted relative to a response. If $r$ is a transverse difference and $g$ the gradient of a chosen readout, $J_{\trans}=g^\top r$ supplies a signed local description. The complementary MonitorBench experiment \citep{wang2026monitorbench} holds the recorded CoT fixed and compares a standard cue with an exact-target lexical control. At positions predicting the target sequence, readout-aligned transverse attribution transfers observationally to held-out math and health domains. The frozen positive-mean causal criterion fails; a subsequent signed patch-effect analysis is correct on 73/82 bases (89.02\%) and remains explicitly post hoc. These existing results concern influence during fixed-CoT replay, rather than recovery of hidden reasoning or a validated detector.

Our contributions are fourfold:
\begin{itemize}
\item A compact cross-family predictive description of semantic fragility, followed by matched interventions demonstrating information omitted by scalar gain.
\item A finite-intervention analysis that separates improved path-based description of a local patch from explanation of the full natural input effect.
\item A component-level validation sequence that retains the initial negative result, tests a revised frozen rule on disjoint original items, and uses expanded answer-order and same-letter donor controls to establish the local scope of its directional advantage.
\item A readout-relative connection to the original fixed-CoT cue study, retaining its transferred association, failed mean-effect criterion, and post hoc signed result as distinct evidence.
\end{itemize}

\Cref{sec:theory,sec:setup} define the measurements and chronology. \Cref{sec:prediction,sec:semantic-causal} present predictive and causal semantic-fragility results, including the new local-window and answer-order studies. \Cref{sec:monitor,sec:monitor-results} retain the fixed-CoT methods and results. The discussion distinguishes shared descriptive relationships from context-independent directions, complete explanations of natural failures, and deployable monitors.

\section{Shared Geometry, Distinct Behavioral Questions}
\label{sec:theory}

\subsection{Paired computations and response-specific outcomes}
Let $P_0$ be a reference input and $P_1$ a contrasted input, with aligned residual states $h^0_{\ell,t}$ and $h^1_{\ell,t}$ at layer $\ell$ and measured position $t$. Set $\delta_{\ell,t}=h^1_{\ell,t}-h^0_{\ell,t}$. Axis I uses the clean question as $P_0$ and its rewrite as $P_1$, at the final prompt token. Axis II uses a lexical-control prompt as $P_0$ and a standard cue as $P_1$, at positions predicting a fixed target sequence after fixed generated CoT. ``Aligned'' here refers to those response-predicting positions; it does not assume equal prompt lengths or tokenwise equivalence throughout the inputs.

The decomposition below is shared, but the outcome is not. Axis I relates geometry to a gold margin and a correct-to-incorrect answer change. Axis II relates it to a target-score contrast and an intervention effect on that score. The target answer need not be correct, so a positive cue-score effect is not intrinsically an error or a harmful change. We do not apply the MMLU failure classifier as a hidden-CoT detector, and we do not invent a clean-margin analogue where the second protocol did not measure one. The two axes share measurements only where their definitions actually coincide.

\subsection{Clean reserve and the unsuccessful first-order boundary account}
For a multiple-choice item with correct option $y$, let $z_j(x)$ be the next-token logit of option $j$. Define
\begin{equation}
 m_j(x)=z_y(x)-z_j(x),\qquad M(x)=\min_{j\ne y}m_j(x).
 \label{eq:margin}
\end{equation}
The main failure analysis conditions on a correct clean answer. For a rewrite $T(x)$, failure means that the selected option becomes incorrect. Clean margin is an empirical measure of decision reserve, not an independent measure of semantic competence.

An initial approximation linearized each margin at an intermediate clean state, with $g_{j,\ell}=\partial m_j/\partial h_\ell$:
\begin{equation}
 \widehat m_j(T(x))=m_j(x)+g_{j,\ell}^{\top}\delta_\ell,
 \qquad R_\ell=\max_{j\ne y}\frac{-g_{j,\ell}^{\top}\delta_\ell}{m_j(x)}.
 \label{eq:R}
\end{equation}
At 50\% depth in Qwen, $R_\ell$ obtained strict held-family/held-item AUROC 0.683, below clean margin alone (0.869), and literal $R_\ell>1$ crossings were essentially absent. Adding observed mid-to-final displacement gain to clean margin and mid-layer displacement improved AUROC to 0.907. These diagnostics reject the adequacy of that particular single-position, mid-layer approximation; they do not show that local derivatives are universally uninformative. They motivated measuring the intervening trajectory rather than treating the initial displacement as already decision-equivalent.

\subsection{Historical scalar coordinates and the fitted risk score}
For the final prompt-token residual state at window endpoints $a<b$, set
\begin{equation}
 \Din=\norm{\delta_a},\qquad \Dout=\norm{\delta_b},\qquad
 \Gw=\frac{\Dout}{\Din+\epsilon}.
 \label{eq:scalar}
\end{equation}
The historical compact classifier uses the \emph{raw} features $(M,\Din,\Gw)$, training-set median imputation, and training-set standardization. With $z_k=(v_k-\mu_k)/\sigma_k$, its log-odds score is
\begin{equation}
 \Lambda=\beta_0+\beta_M z_M+\beta_D z_D+\beta_G z_G,
 \qquad p(F)=\operatorname{sigmoid}(\Lambda).
 \label{eq:score}
\end{equation}
There is no logarithmic transform in this compact score. Log-transformed features appear only in separately identified diagnostics, including the conditional within-item analysis. The fitted coefficient signs associate greater clean reserve with lower risk and larger displacement and transmission with higher risk. These associations are not monotonicity guarantees for arbitrary interventions.

For nonzero $\Din$, distinguish the unregularized geometric ratio $\Gz=\Dout/\Din$ from the stabilized ratio used by the scorer. $\Gz<1$ denotes net contraction; $\Gz>1$ denotes net expansion. The ratio is an observed property of a pair of trajectories, not an operator norm, Lipschitz constant, or robustness certificate \citep{tsuzuku2018lipschitz,muthukumar2023local}. In particular, a safe rewrite need not have $\Gz<1$.

\subsection{Exact decomposition of the window update}
Define the difference in accumulated window updates by
\begin{equation}
 u_W=\delta_b-\delta_a
     =(h_b^T-h_a^T)-(h_b^C-h_a^C).
 \label{eq:windowupdate}
\end{equation}
For $\Din>0$, its longitudinal coefficient and transverse residual are
\begin{equation}
 q_W=\frac{\ip{\delta_a}{u_W}}{\Din^2},\qquad
 r_W=u_W-q_W\delta_a,\qquad
 \rho_W=\frac{\norm{r_W}}{\Din}.
 \label{eq:decompose}
\end{equation}
When $r_W\ne0$, write $\hat r_W=r_W/\norm{r_W}$. Orthogonality gives $r_W\perp\delta_a$ and
\begin{align}
 \delta_b&=(1+q_W)\delta_a+r_W
          =(1+q_W)\delta_a+\rho_W\Din\hat r_W,\label{eq:decomp}\\
 \Gz^2&=(1+q_W)^2+\rho_W^2,\label{eq:gainidentity}\\
 \Gw^2&=\left(\frac{\Din}{\Din+\epsilon}\right)^2
           \big[(1+q_W)^2+\rho_W^2\big].\label{eq:epsilonidentity}
\end{align}
The factor in \cref{eq:epsilonidentity} matters for notation: an identity stated for $\Gz$ must not silently be presented as exact for the regularized scorer feature. The experiments numerically check their implemented decomposition; the algebraic identity itself does not depend on a trained model.

The earlier cancellation/innovation statistics are the same geometry in another form:
\begin{equation}
 C_W=-2q_W,\qquad I_W=\frac{\norm{u_W}^2}{\Din^2}=q_W^2+\rho_W^2,
 \qquad \Gz^2=1-C_W+I_W.
 \label{eq:cancellation}
\end{equation}
Positive $C_W$ denotes an update component opposing the incoming displacement, not successful behavioral stabilization. Transverse energy can outweigh that component. Likewise, a negative $q_W$ does not by itself imply contraction, and an increase in $\rho_W$ is not inherently harmful. When $\Din=0$, these normalized coordinates are undefined; when $r_W=0$, there is no defined transverse orientation to swap. Such cases require explicit numerical handling rather than an arbitrary direction.

\subsection{Why equal gain need not imply equal behavioral effect}
\begin{proposition}[Scalar geometry does not determine a readout]
Fix $\delta_a\ne0$, $q_W$, and $\rho_W>0$. All endpoints of the form
$(1+q_W)\delta_a+\rho_W\Din\hat r$, with unit $\hat r\perp\delta_a$, have the same displacement norm and gain. A linear readout with a nonzero component orthogonal to $\delta_a$ need not assign them the same value.
\end{proposition}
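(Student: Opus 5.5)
The argument has two parts: an invariance computation for the norm and gain, and an explicit construction showing that a linear readout separates two such endpoints. For the first part, take any unit $\hat r\perp\delta_a$ and set $\delta_b(\hat r)=(1+q_W)\delta_a+\rho_W\Din\hat r$. Expanding the squared norm, the cross term $2(1+q_W)\rho_W\Din\ip{\delta_a}{\hat r}$ vanishes by orthogonality. Using $\norm{\hat r}=1$, this leaves
\begin{equation*}
\norm{\delta_b(\hat r)}^2=\Din^2\big[(1+q_W)^2+\rho_W^2\big],
\end{equation*}
which does not depend on $\hat r$. This is exactly \cref{eq:gainidentity} read in reverse. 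Hence $\Dout$, $\Gz$ and, through \cref{eq:epsilonidentity}, the regularized $\Gw$ are all constant over the admissible family. I will also note that each endpoint is consistent with the given $(q_W,\rho_W)$. With $u_W=\delta_b(\hat r)-\delta_a$, \cref{eq:decompose} returns the same $q_W$ because $\ip{\delta_a}{\hat r}=0$. It returns the same $\rho_W$ because the transverse residual is $\rho_W\Din\hat r$. So the scalar decomposition really is held fixed.

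For the second part, let the readout be $L(h)=w^\top h+c$. Write $w=\alpha\delta_a+w_\perp$ with $w_\perp\perp\delta_a$ and $w_\perp\ne0$, as the hypothesis requires. Then
\begin{equation*}
L(\delta_b(\hat r))=c+\alpha(1+q_W)\Din^2+\rho_W\Din\, w_\perp^\top\hat r.
\end{equation*}
Only the last term depends on $\hat r$. Choose $\hat r_\pm=\pm w_\perp/\norm{w_\perp}$. Both are unit vectors orthogonal to $\delta_a$, so both are admissible. The two values differ by $2\rho_W\Din\norm{w_\perp}$, which is strictly positive because $\rho_W>0$, $\Din>0$ and $w_\perp\ne0$. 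This proves the claim, and in fact the stronger statement that the readout sweeps the full interval of width $2\rho_W\Din\norm{w_\perp}$ by Cauchy--Schwarz. The same argument applies if the readout is evaluated at $h_b^0+\delta_b$, since the affine offset cancels in the difference.

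The main obstacle is a subtlety rather than a computation. The transverse subspace $\delta_a^\perp$ must be large enough to contain a direction with $w_\perp^\top\hat r\ne0$, and that is guaranteed by choosing $\hat r$ parallel to $w_\perp$ itself. If the ambient dimension is $2$, the admissible set is just $\{\pm\hat r\}$. The construction still works there because $w_\perp$ spans that line. I will also remark that the statement's ``need not'' is witnessed by this explicit pair. For readouts with $w_\perp=0$, the value is indeed constant across the family, which shows the hypothesis is sharp.
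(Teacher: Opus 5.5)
Your proof is correct and follows the paper's argument. Orthogonality removes the cross term, so the norm (hence $\Gz$ and $\Gw$) is constant, and taking $\hat r=\pm w_\perp/\norm{w_\perp}$ flips the sign of the only orientation-dependent term $\rho_W\Din\,w_\perp^\top\hat r$, exactly as the paper's aligned and anti-aligned transverse orientations do. (One aside is overstated: the readout sweeps the whole interval of width $2\rho_W\Din\norm{w_\perp}$ only because the unit sphere in $\delta_a^\perp$ is connected, which needs ambient dimension at least three; Cauchy--Schwarz alone gives only the bounds, and in your two-dimensional case only the two endpoints are attained. This does not affect the proposition.)
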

\begin{proof}
Orthogonality proves the common norm in \cref{eq:gainidentity}. For readout vector $g$, its endpoint change is
$(1+q_W)g^\top\delta_a+\rho_W\Din g^\top\hat r$. Choosing opposite transverse orientations aligned and anti-aligned with the nonzero transverse component of $g$ changes the sign of the second term without changing any of the scalar coordinates.
\end{proof}
This proposition is a mathematical possibility result, not evidence that a particular model uses a specific readout direction. The orientation-swap experiments test whether this omitted degree of freedom is consequential in the models and cohorts studied here.

\subsection{From orientation to signed readout alignment}
\label{sec:signed-theory}
A scalar response $\psi$ is locally sensitive to a direction $v$ through $\nabla\psi^\top v$. At one endpoint position, with gradient $g$ evaluated at a specified reference computation,
\begin{equation}
 J_{\mathrm{total}}=g^\top\delta_b
  =\underbrace{g^\top[(1+q_W)\delta_a]}_{J_{\longi}}
   +\underbrace{g^\top r_W}_{J_{\trans}}.
 \label{eq:readoutdecomp}
\end{equation}
The quantity is signed and response-dependent. For a gold margin, a positive contribution locally favors the correct answer relative to the chosen competitor; for target log probability it favors that target, whether or not the target is correct. Neither $\rho_W$ nor $\norm{r_W}$ supplies that sign. This is the conceptual link between the two research axes: magnitude describes how much the representations differ, whereas readout alignment describes the local consequence of a particular direction.

For a sequence-level response, let $H_b^0=(h^0_{b,t})_{t\in\mathcal I}$ collect the states to be patched and let $\phi$ map this tuple to the response, with the reference input, all unpatched layer-$b$ states, and any teacher-forced continuation fixed. For a transverse tuple $R=(r_t)_{t\in\mathcal I}$, define
\begin{align}
 C_R(\alpha)&=\phi(H_b^0+\alpha R)-\phi(H_b^0),\label{eq:general-patch}\\
 J_R&=\sum_{t\in\mathcal I}\left(\nabla_{h^0_{b,t}}\phi\right)^\top r_t,
 \qquad C_R'(0)=J_R.\label{eq:derivative}
\end{align}
The derivative identity holds where the reference computation is differentiable. Under suitable local smoothness, $C_R(\alpha)=\alpha J_R+O(\alpha^2)$. This elementary statement does not guarantee the sign or rank of $C_R(1)$: finite-step agreement is an empirical test. Nor does it imply $\E[C_R(1)]>0$. An accurate signed predictor can coexist with an average effect near zero when different examples have effects of different signs.

\panel{Scope of the local model.}
Axis II instantiates $\phi$ as mean target-sequence log probability at the lexical reference; its gradients and patches use the same target-prediction positions. This does not retroactively validate the failed mid-layer boundary approximation in \cref{eq:R}, which used a different reference, position, response, and perturbation scale. It also does not identify a special semantic circuit merely because a matched derivative predicts a small patch effect. The held-domain relation to the full prompt contrast is a separate empirical question.

A last-token residual state is not a closed dynamical state: other positions and subsequent attention can affect the result. The same limitation applies to a tuple restricted to target-prediction positions. Its patch effect need not equal the effect of changing the entire prompt. \Cref{app:context} formalizes the missing-context term. Residual downstream displacement after a clean last-token replacement identifies insufficiency of the partial state, not a unique contextual reinjection pathway.

\subsection{What is shared, and what is not}
The original hypothesis proposed approximate cross-family regularity,
\begin{equation}
 P(F\mid M,\Din,\Gw,T=t_1)\approx P(F\mid M,\Din,\Gw,T=t_2).
 \label{eq:sharedhyp}
\end{equation}
Held-family prediction and small incremental value of family indicators support this as a useful empirical approximation on the historical distribution. They do not prove conditional independence, sufficiency of the three scalars, or equality of underlying circuits. The refined hypothesis retains shared \emph{measurement coordinates} while allowing family-specific trajectories, model-specific window locations, and example-specific directions. The experiments distinguish scalar prediction, directional causal sensitivity, and readout-specific signed influence rather than treating them as interchangeable evidence.

\section{Experimental Design and Evidence Chronology}
\label{sec:setup}
\subsection{Historical paired MMLU cohort}
We pair 500 MMLU base questions with six published RUPBench transformations: \texttt{checklist}, \texttt{compound}, \texttt{it\_cleft}, \texttt{red\_herring}, \texttt{stress}, and \texttt{wh\_cleft}. All six variants are present before evaluation. Qwen answers 399 clean items correctly, yielding 2,394 variant rows and 193 failures; Gemma retains 428 items, 2,568 rows, and 185 failures. A family name is not evidence that every instance preserves the task; \cref{sec:audit} reports the separate validity sensitivity analysis.

Multiple-choice scoring uses the next-token logits of single-token labels A, B, C, and D. States are measured at the final prompt position. Qwen runs use non-thinking inference and bfloat16; Gemma uses its fixed replication configuration. Layer numbers denote the block-output probes in each runner, not universal normalized coordinates. Qwen has 64 blocks and probes L32, 40, 48, 50, 52, 54, and 56; Gemma has 60 blocks and probes L30, 38, 45, 47, 49, 51, and 52. The historical prospective predictor excludes the final LM-head input.

\subsection{Predictive evaluation and window selection}
The strict protocol holds out a family and base items simultaneously. For each held-out family, deterministic five-fold item splits exclude every variant of the test items from training, as well as the entire held-out family. Median imputation, standardization, and unweighted logistic fitting use only the training partition. We report the mean of six held-family AUROCs. Grouped out-of-fold and cross-model results are separate estimands.

The window scan is historical discovery: Qwen L52--54 and Gemma L38--45 were selected from the respective MMLU depth analyses. Selection was not independently nested inside every reported prediction fold. They were then fixed for the subsequent experiments that explicitly inherit them. Thus held-family/item prediction controls score fitting, but is not by itself a fully external test of window selection. Base-item bootstrap resampling preserves dependence among variants; neither variant rows nor random donor draws are independent sample units.

\subsection{Original ReClor construction and transfer studies}
The ReClor stimulus set contains lexical paraphrases, grammatical restructurings, and irrelevant-context additions. Of 750 requested generation slots, 746 candidates were imported and AI-screened; 707 were approved, 10 rejected for grammar/rules, 18 rejected for semantic changes, and 11 marked uncertain. The 707 retained variants cover 250 base questions. The text and screening decisions were frozen before the initial target evaluation, without selecting by candidate answer or mechanism score. These are AI-screen-passed rewrites, not certified semantic equivalents.

Both model evaluations reuse this same text set and then condition on their own clean-correct items. Frozen MMLU-to-ReClor \emph{prediction} is distinct from the later fixed-rule \emph{intervention} replication. The latter reuses the ReClor cohort and inherits windows, numerical rules, and interventions from MMLU; it is not relabeled as a fresh, untouched-cohort confirmation. The Qwen endpoint parameterization was developed after the initial ReClor ratio-score result but fitted only on historical MMLU. Both Gemma parameterizations were frozen before Gemma ReClor inference; the stimuli had already been evaluated on Qwen.

\subsection{Intervention eligibility, controls, and analysis units}
Every intervention contrasts an edited computation with its replayed baseline and checks that the hook is removed afterward. Numerical feasibility is determined from natural states and requested edits, before new edited logits. Direction-swap groups require the swap and all four matched random controls to satisfy the frozen tolerances; incomplete groups are not analyzed selectively. Bidirectional analyses use only items eligible in both receiver directions. Magnitude and orientation feasibility are screened separately.

For the historical ReClor transverse-swap replication, the two prespecified primary contrasts were the main-window signed bidirectional orientation effect and the $\alpha=0.10$ equal-dose transverse-minus-longitudinal response. Its protocol specifies 10,000 item-bootstrap resamples and nominal 97.5\% intervals for each contrast, corresponding to a Bonferroni nominal 95\% familywise allocation. Small sample size and feasibility selection still limit precision. No dose increase, tolerance relaxation, or outcome-dependent seed replacement is used to rescue an infeasible experiment.

\subsection{New finite-effect and component experiments}
The later Qwen experiments retain the original 707 approved rewrites for finite-effect replay, natural window tracing, and component discovery/validation. They use 601 clean-correct pairs from 213 original items. These reused-input analyses are distinct from the earlier seven-item transverse-swap replication: the readout replay patches a full L54 residual state, while the component study patches an attention or MLP output and allows the suffix to recompute. Their scalar summaries fix the worst competitor selected from the clean logits. The initial component validation targets the average effect over all eligible rewrites. A subsequent failure-conditioned rule is an exploratory refinement informed by that result, frozen before the new-item experiment.

The new acquisition excludes all 250 old original items and their clean-token hashes, leaving 250 other labelled ReClor items. Fixed generation and AI review produce 702 approved rewrites. The target remains Qwen3-32B; generation and review use the same Qwen3.8-27B checkpoint. The later option-order experiment reuses these rewrites under four cyclic layouts, with clean references recomputed per layout. It switches to minimum gold margin for its effect score, making its numerical effects distinct from the earlier fixed-competitor margins. All new component intervals use 2,000 original-item bootstrap resamples. Primary-family allocations are nominal 98.33\% for three contrasts, or 99.17\% for six order-control contrasts. Twenty failure items per family is a frozen interpretive threshold, not a power guarantee; none of the new failure cohorts meets it. \Cref{sec:finite_reclor,sec:window_components,sec:fresh-validation,sec:order-controls} provide the complete denominators and chronology.

\subsection{CoT-monitorability cohorts and frozen chronology}
The second axis uses Qwen3-32B cue-answer tasks from MonitorBench. Logic is the development domain (50 raw bases); its reported causal analysis contains 93 continuations from 25 eligible bases. The later cross-domain protocol starts from 28 math and 65 health bases. It freezes the L52--54 window, standard-minus-lexical contrast, target-position attribution, patch doses, controls, and success rule before the new-domain mechanism results are inspected. The analyzed nondisclosed cohort has 348 continuations from 82 bases: 86/20 for math and 262/62 for health. These are continuation/base counts, not model-accuracy numerators and denominators.

The rule is frozen for the math/health \emph{mean-effect} test, which fails. The later use of signed patch-effect accuracy as the refined headline outcome is post hoc. A zero sign threshold and a previously computed gradient do not make that retrospective choice confirmatory. A prepared Gemma replication is not counted as completed evidence. \Cref{sec:monitor,sec:monitor-results} give the measurement and outcomes separately.

\begin{table}[htbp]
\centering\small
\caption{Evidence chronology. ``Frozen'' refers to the specified stage, not to a single preregistration covering the full research program.}
\label{tab:chronology}
\begin{tabularx}{\linewidth}{@{}L{0.25\linewidth}L{0.25\linewidth}Y@{}}
\toprule
Study & What was fixed & Status and remaining exposure\\
\midrule
MMLU geometry & Per-fold fitting and held families/items & Window selection is historical discovery.\\
ReClor prediction & MMLU score and inherited window & Qwen endpoint comparison is post hoc; Gemma scorers frozen before its inference.\\
ReClor transverse swaps & Inherited windows, edits, numerical rules & Historical fixed-rule replication on reused stimuli; seven primary direction items.\\
Finite replay / window trace & L52--54, response and replay protocol & Reused old items; partial-state and finite-precision limits remain.\\
Initial component validation & Source-family/base selection, matched controls & All three all-pair intervals include zero; revised estimand is separate.\\
New-item component test & Failure-conditioned selector and L54.attn & Frozen before 250 disjoint items; low failure counts and correlated AI review.\\
Option-order controls & Component, donors, matching and six contrasts & Reuses new-item cohort; current-order references, not vector transport.\\
CoT: logic & Development analysis & Hypothesis-development evidence, not external confirmation.\\
CoT: math/health & Contrast, window, attribution, patch rule, mean-effect gates & Observational gate passes; original causal gates fail.\\
CoT: signed prediction & Zero-threshold rule on completed outputs & Post hoc refined estimand; independent confirmation pending.\\
\bottomrule
\end{tabularx}
\end{table}

\panel{Scope of the integrated evidence.}
The studies were completed in successive stages, not one jointly preregistered experiment. Historical discovery, frozen prediction, fixed-rule replication on previously used stimuli, and post hoc analyses are labeled separately. The CoT-monitorability axis has its own contrast, eligibility rule, and outcome; it is not pooled with semantic-rewrite failure labels. The results retained in this working revision and their source-recovery status are documented in \cref{app:provenance} and the accompanying evidence ledger.

\section{Axis I: Predicting Semantic Fragility}
\label{sec:prediction}
This axis begins with discrete correctness and then asks which internal differences distinguish failed from successful rewrites. The scalar predictor is a useful baseline description; subsequent direction-sensitive experiments test what it leaves out.
\subsection{A late predictive transition in Qwen}
\Cref{tab:historical,fig:qwen-depth} summarize the historical results. Clean margin is already informative (AUROC 0.869), and displacement adds little in the first half of Qwen. The compact margin--incoming-displacement--gain model improves from 0.877 at L48 to 0.906 at L50, 0.923 at L52, and 0.953 at L54. Its largest adjacent increase is approximately 0.0295 at the L52--54 transition; performance falls to 0.916 at L56. The region is a late band with a predictive peak, not a single universally decisive block.

\begin{table}[htbp]
\centering\small
\caption{Historical MMLU cohorts and strict mean held-family/held-item AUROC. Windows were selected in the historical depth analyses. The AUROC columns and grouped out-of-fold results in the text are distinct estimands.}
\label{tab:historical}
\begin{tabular}{lrrrrcc}
\toprule
Model & Bases & Variants & Failures & Failure rate & Margin & Compact \\
\midrule
Qwen3-32B &399&2,394&193&8.06\%&0.869&0.953\\
Gemma 4 31B &428&2,568&185&7.20\%&0.787&0.910\\
\bottomrule
\end{tabular}
\end{table}

\begin{figure}[htbp]
\centering\includegraphics[width=0.82\linewidth]{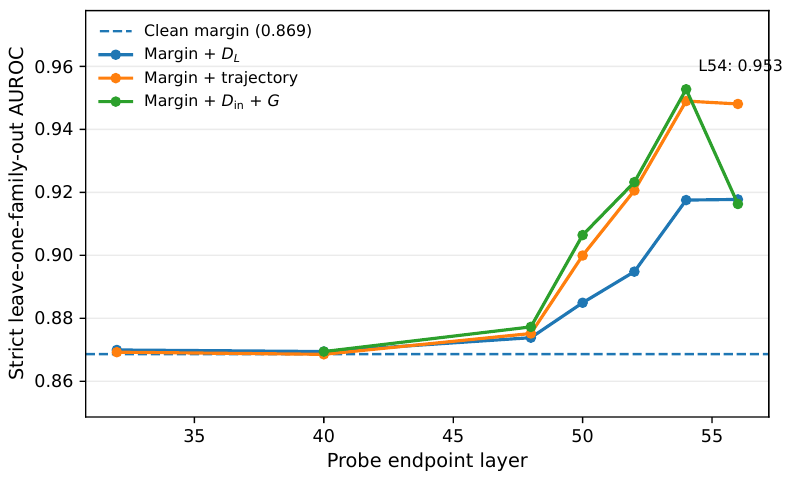}
\caption{Historical Qwen depth profile, retained from the earlier manuscript. The compact score peaks at L54, using $M$, $D_{52}$, and $G_{52\to54,\epsilon}$. This scan selects the later intervention window; it is not a separately nested window-selection test.}
\label{fig:qwen-depth}
\end{figure}

Held-family AUROCs at that window are 0.959 for checklist, 0.959 for compound, 0.950 for it-cleft, 0.952 for red-herring, 0.945 for stress, and 0.951 for wh-cleft. The grouped item-disjoint out-of-fold AUROC is 0.952. Adding family intercepts and interactions changes it to 0.951 and slightly worsens log loss, from 0.1493 to 0.1499. These results support a useful shared predictor on the historical distribution, rather than a formal equality of conditional failure laws.

\subsection{Within-question discrimination beyond clean difficulty}
Among the 399 clean-correct Qwen items, 93 fail under at least one rewrite and 92 have both failed and successful variants. Their 625 failed--successful pairs hold the clean question fixed. Incoming displacement has pairwise concordance 0.680, local gain 0.778, and their standardized sum 0.814; a family-only out-of-fold model gives 0.621.

Exact conditional logistic regression removes all item-constant susceptibility, including clean margin. A one-standard-deviation increase in log incoming displacement has odds ratio 2.54 ($p=1.37\times10^{-9}$); the corresponding log-gain odds ratio is 5.46 ($p=1.05\times10^{-16}$). Both associations remain large after adding family nuisance indicators. Within-item label permutations give $p<0.001$ for each geometric quantity and their sum. This log-feature conditional analysis is distinct from the raw-feature compact predictor in \cref{eq:score}.

\subsection{Retrospective semantic-validity sensitivity analysis}
\label{sec:audit}
A separate AI-assisted audit considered 100 stratified base questions and all six variants, or 600 pairs. Screening information included the original and transformed question, options, and gold answer, but not target predictions, failure labels, geometry, or frozen risk. Criteria included negation, quantification, predicate--argument relations, conditional scope, and task-relevant information changes.

The available screening labels retain all 100 checklist and 100 stress examples and 61 red-herring examples, but reject the compound, it-cleft, and wh-cleft examples under the conservative task-preservation criterion. Of the audited bases, 84 are in Qwen's historical clean-correct cohort, giving 504 rows. Applying the labels without refitting leaves 221 retained rows and 19 failures. \Cref{tab:audit} shows that discrimination changes little.

\begin{table}[htbp]
\centering\small
\caption{Frozen Qwen score after retrospective semantic filtering. These audited rows overlap the historical fitting data. The table is a sensitivity analysis under preliminary AI-screening labels, not held-out generalization or independent human verification. Within-item concordance uses only failed--successful pairs.}
\label{tab:audit}
\begin{tabular}{lrrrrrr}
\toprule
Subset & Rows & Failures & Failure rate & AUROC & AP & Concordance\\
\midrule
All audited historical &504&40&7.94\%&0.957&0.593&0.866\\
Screen-passed &221&19&8.60\%&0.953&0.589&0.875\\
Screen-rejected &283&21&7.42\%&0.961&0.599&0.842\\
Checklist + stress &168&16&9.52\%&0.950&0.606&0.917\\
\bottomrule
\end{tabular}
\end{table}

For checklist and stress, the failure-rate Wilson 95\% interval is [5.95\%, 14.91\%]. Mean predicted risk is 7.96\%, compared with 9.52\% observed. Within-item concordances on the screen-passed and checklist--stress subsets are based on only 24 and 12 informative pairs, respectively.

The available audit history also records default decisions with specified exception overrides in an earlier annotation-generation workflow. Populated annotation fields therefore do not independently establish exhaustive row-by-row review. We treat the screening labels as preliminary and the analysis as conditional on them. Neither the family-level acceptance counts nor the stable AUROC prove semantic equivalence of every retained rewrite. A fresh human audit would strengthen the semantic interpretation.

\subsection{Replication at a different depth in Gemma}
\label{sec:gemma-predict}
Gemma's margin baseline is 0.787. Both its best trajectory model and the compact L38--45 score reach approximately 0.910. Adding family identity to the strongest shared geometry reduces the reported item-disjoint AUROC from 0.910 to 0.905. The transition occurs earlier than Qwen's, supporting portability of the measurement structure rather than a universal layer fraction (\cref{fig:gemma-depth}).

\begin{figure}[htbp]
\centering\includegraphics[width=0.82\linewidth]{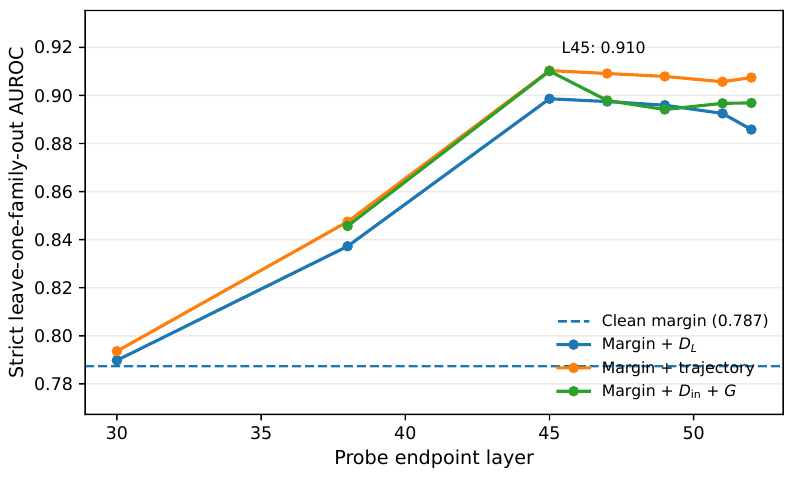}
\caption{Historical Gemma depth profile, retained from the earlier manuscript. The useful transition occurs around L38--45 of 60 blocks. It does not coincide with Qwen's normalized window.}
\label{fig:gemma-depth}
\end{figure}

On the 378 bases that both models answer correctly in clean form, item-level fragility correlates with Spearman $\rho=0.440$ ($p=2.71\times10^{-19}$). With Qwen's normalized L52--54 window imposed on both models, the two transfer directions yield AUROC 0.811 (Qwen to Gemma) and 0.891 (Gemma to Qwen). Using each model's historically selected window yields 0.900 and 0.943. Target failure labels are not used to refit the transferred score, but target normalization uses unlabeled target features, and the Gemma window was selected using Gemma's historical outcomes. This is transductive zero-target-label score transfer, not source-only external validation.

\subsection{Frozen MMLU-to-ReClor prediction}
\label{sec:reclor-pred}
\Cref{tab:reclor} makes explicit results previously reported only in the Introduction. Qwen retains 214 clean-correct bases and 604 variants, with 15 failures (2.48\%). Gemma retains 240 bases and 680 variants, with 14 failures (2.06\%). The same 707 approved variants are the starting stimulus set for both models; evaluated rows differ because their clean-correct sets differ.

\begin{table}[htbp]
\centering\small
\caption{Frozen MMLU-to-ReClor transfer. ``Ratio'' uses $(M,\Din,\Gw)$; ``endpoint'' uses $(M,\Din,\Dout)$. The Qwen endpoint comparison was developed post hoc after the initial ratio result, but uses only historical MMLU fitting labels. Both Gemma forms were frozen before its ReClor evaluation. A dash denotes a value not recovered in the available summary, not zero.}
\label{tab:reclor}
\begin{tabular}{lrrrccc}
\toprule
Model & Bases & Variants & Failures & $-M$ AUROC & Ratio AUROC & Endpoint AUROC\\
\midrule
Qwen3-32B &214&604&15&0.9561&0.9261&0.9782\\
Gemma 4 31B &240&680&14&--&0.9340&0.9689\\
\bottomrule
\end{tabular}
\end{table}

The original Qwen ratio score therefore does \emph{not} outperform clean margin on this transfer set: its AUROC is 0.926089 versus 0.956140 for $-M$. It also overestimates average risk, 0.099894 predicted versus 0.024834 observed. The endpoint form reaches AUROC 0.978155 and Brier score 0.018875. Gemma's ratio and endpoint AUROCs are 0.934041 and 0.968897; the archived endpoint-minus-ratio interval includes zero. Both ratio scorers overestimate mean risk, so ranking transfer should not be equated with calibrated failure probabilities.

The comparison has a simple statistical explanation that must not be mistaken for new information. With finite valid features and fixed $\epsilon$,
\begin{equation}
 \Dout=(\Din+\epsilon)\Gw.
 \label{eq:invertible}
\end{equation}
Thus $(\Din,\Gw)$ and $(\Din,\Dout)$ are invertibly related. Linear logistic models in the two coordinate systems nevertheless impose different functional restrictions and regularization geometry: one uses a ratio and the other an implicit displacement--gain interaction. Consistently, Gemma's historical MMLU comparison shows little separation: mean LOFO AUROC 0.909775 versus 0.911182 and grouped out-of-fold AUROC 0.906117 versus 0.906597. This is not a formal equivalence test. The appropriate conclusion is useful transfer with parameterization and calibration limits, not universal superiority of one three-variable score.

\subsection{Natural trajectories: what the gain association contains}
\label{sec:natural}
The scalar score identifies where transmission becomes predictive but does not explain the observed increase in gain. We apply \cref{eq:decompose,eq:cancellation} to natural Qwen trajectories in four fixed windows: L48--50, L50--52, L52--54, and L54--56. The archived analysis divides the 92 mixed-outcome items into 38 development and 54 evaluation items. This is a within-historical-cohort analysis, not a new dataset evaluation.

For each evaluation item, failed and successful rewrites are compared within the same base question. \Cref{tab:natural} reports failure-minus-safe differences. Transverse energy is higher for failures through L48--54, with the largest reported difference at L52--54. At that window, $\Delta\rho_W^2=0.3843$, compared with $\Delta\Gz^2=0.4408$. Their ratio is approximately 87.2\%. This is the ratio of two mean descriptive contrasts, \emph{not} a causal mediation fraction or explained variance.

\begin{table}[htbp]
\centering\small
\caption{Natural Qwen geometry, evaluation split of 54 mixed-outcome base items. Entries are within-item failure-minus-safe means with archived item-bootstrap 95\% intervals. The final window shows that the pattern is not a monotone increase toward the output.}
\label{tab:natural}
\begin{tabular}{lccc}
\toprule
Window & $\Delta q_W$ & $\Delta\rho_W^2$ & $\Delta\Gz^2$\\
\midrule
L48--50 &0.0396 [0.0247, 0.0548]&0.2516 [0.1629, 0.3455]&0.3250 [0.2145, 0.4388]\\
L50--52 &0.0221 [0.0148, 0.0295]&0.1743 [0.1218, 0.2316]&0.2172 [0.1559, 0.2836]\\
L52--54 &0.0271 [0.0179, 0.0360]&0.3843 [0.2665, 0.5093]&0.4408 [0.3198, 0.5715]\\
L54--56 &0.0083 [$-$0.0008, 0.0171]&$-$0.0446 [$-$0.0727, $-$0.0172]&$-$0.0267 [$-$0.0631, 0.0048]\\
\bottomrule
\end{tabular}
\end{table}

The corresponding L52--54 cancellation difference is $-0.0541$ [${-}0.0720$, ${-}0.0358$], while innovation differs by $+0.3867$ [0.2689, 0.5115]. This is consistent with both less opposing longitudinal response and greater transverse difference, rather than a purely longitudinal failure process. Since the longitudinal contribution to squared gain is $(1+q_W)^2$, its group difference must not be replaced by $(\Delta q_W)^2$.

A separately reported incoming-displacement/length-matched comparison retains 32 items and 38 matched pairs at L52--54. It gives $\Delta\rho_W^2=0.3098$ [0.1550, 0.4744] and $\Delta\Gz^2=0.3519$ [0.1794, 0.5310], a descriptive ratio of approximately 88.0\%. Matching does not remove all directional or linguistic confounds, but it reduces the explanation that the entire difference reflects incoming magnitude or prompt length.

At L54--56, the transverse-energy difference reverses and the squared-gain interval includes zero. The proposed critical region should therefore be understood as a model- and contrast-dependent band of informative directional processing, not an inevitable late-layer amplification cascade. Natural geometry motivates the direction and magnitude experiments below; it cannot substitute for them.

\subsection{Prospective use without candidate-answer feedback}
\label{sec:search-bridge}
The same frozen scalar score guides a behavior-blind search on 241 previously unused clean-correct MMLU items. The final beam fails on 20.02\% of candidates versus 2.49\% in frozen low-risk controls, an 8.04-fold enrichment. Its AUROC remains 0.886 within the final beam. The full search, reveal, and selection accounting is retained in \cref{app:search}. No reusable new linguistic family was frozen or discovered. These results show that an incomplete scalar summary can prioritize fragile natural rewrites; they do not make it sufficient under interventions that deliberately change an omitted direction. The exploratory MASC procedure-steering result is retained separately in \cref{app:masc}, rather than counted as a third main axis.

\section{Axis I: Directional Causal Tests}
\label{sec:semantic-causal}
Prediction does not identify what a window naturally computes. We therefore distinguish Qwen endpoint correction, fixed-scalar transverse orientation exchange, and Gemma entry-geometry controls followed by a naturally computed endpoint. The new studies then test finite readout replay, interventions within the Qwen window, and matched component directions across new items and answer layouts. Their patch locations and held-fixed quantities differ. Together they test whether direction matters; they do not provide interchangeable estimates of one universal causal effect.

\subsection{Historical Qwen Causal Tests: Dose, Direction, and Layer}
\label{sec:causal-qwen}
\subsubsection{Endpoint contraction and reverse amplification}
The historical intervention replaces the perturbed final prompt-token state at L54 with
\begin{equation}
 h_{54}^{(\alpha)}=h_{54}^{C}+\alpha(h_{54}^{T}-h_{54}^{C}).
 \label{eq:alpha}
\end{equation}
Here $\alpha=1$ is a no-op, $\alpha<1$ contracts the endpoint difference, and $\alpha>1$ amplifies it. All computation through L53 is unchanged, including the incoming L52 measurement. The primary cohort comprises 187 failed variants from mixed-outcome items and 187 safe variants matched by family.

\begin{table}[htbp]
\centering\small
\caption{Historical Qwen endpoint contraction on 187 failed variants. Gold-margin changes are relative to the replayed unedited variant, unlike the swap-minus-random contrasts reported later.}
\label{tab:dose}
\begin{tabular}{rrr}
\toprule
Endpoint scale $\alpha$ & Mean gold-margin change & Failure rescue\\
\midrule
1.00 &0.000&0.0\%\\
0.75 &1.114&17.6\%\\
0.50 &2.270&51.3\%\\
0.25 &3.48&69.0\%\\
0.00 &4.793&80.7\%\\
\bottomrule
\end{tabular}
\end{table}

Recovery is graded (\cref{tab:dose}). All six family-specific contraction slopes are positive, with their reported bootstrap intervals above zero; a family-label permutation test does not detect slope heterogeneity ($p=0.186$). This absence of detected heterogeneity is not proof of equal effects. On safe variants, amplification to $\alpha=1.25$ reduces the mean margin by 0.281 and induces failure in 4.8\%; $\alpha=1.5$ reduces the margin by 0.529 and induces failure in 9.63\%. Contraction is the stronger causal observation, with amplification providing a reverse-direction check.

Full last-token replacement does not rescue every failure. Other positions and the rest of the perturbed computation remain unchanged. Moreover, this patch directly changes the endpoint: it does not show that the natural window implements the same correction or that scalar gain mediates all behavioral change.

\subsubsection{Equal-norm direction controls}
For 72 failed variants, 12 per family, define $c=h_{54}^{C}-h_{54}^{T}$. At strength $s=0.5$, compare $sc$ with an equal-norm isotropic random update and an equal-norm random update orthogonal to $c$. Each case has eight draws of each control. The reported maximum norm mismatch is below 0.1\%, and orthogonal cosine is numerically zero.

Clean-directed correction improves the gold margin by 2.153 and rescues 51.4\% of failures. Random updates improve the margin by 0.013 and rescue 2.78\%; orthogonal updates change it by $-0.005$ and rescue 1.56\%. The item-cluster bootstrap contrasts are 2.140 [1.729, 2.594] for clean minus random and 2.158 [1.747, 2.589] for clean minus orthogonal. Own-clean correction beats the mean random update on 70 of 72 cases and the mean orthogonal update on all 72.

The result establishes direction specificity for this intervention, not one shared vector across examples. Each clean correction is computed from that example's own counterpart. It also leaves a geometric confound for comparing own-clean against empirical donors: at a given step norm, own-clean most efficiently reduces distance to the reference. Gemma's mirror experiment addresses that issue separately.

\subsubsection{Absolute-norm-matched layer localization}
For the same 72-case design and $\ell\in\{40,48,50,52,54,56\}$, let $c_\ell=h_\ell^C-h_\ell^T$, $D_\ell=\norm{c_\ell}$, and
\begin{equation}
 D_{\min}=\min_\ell D_\ell,\qquad
 v_\ell=0.5D_{\min}\frac{c_\ell}{D_\ell}.
 \label{eq:normlayer}
\end{equation}
Every layer receives the same absolute norm within an example, in its own clean-correction direction. The minimum is at L40 in all 72 cases, and the maximum realized within-case norm spread is approximately 0.15\%.

Mean margin improvement is 0.023 at L40, 0.038 at L48, 0.155 at L50, 0.269 at L52, 0.465 at L54, and 0.417 at L56 (\cref{fig:norm}). The largest adjacent increase is L54 minus L52: 0.196, with item-cluster 95\% interval [0.143, 0.258]. L54 also exceeds L56 by 0.049 [0.007, 0.096]. All six families have a positive L54-minus-L52 contrast. Binary rescue contrasts are not significant under this deliberately small update budget; continuous margin change is the primary outcome.

\begin{figure}[htbp]
\centering\includegraphics[width=0.70\linewidth]{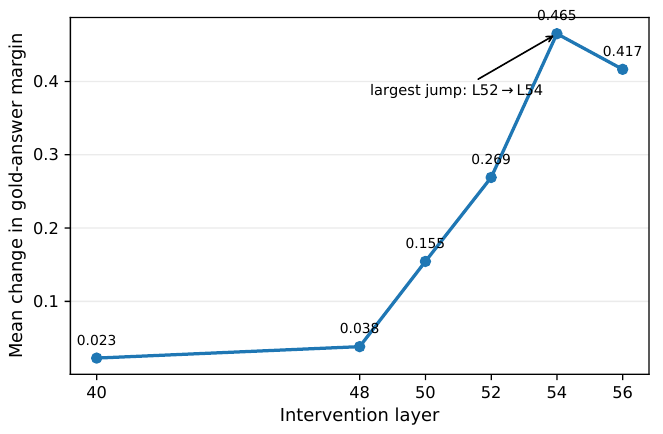}
\caption{Absolute-norm-matched Qwen correction, retained from the earlier manuscript. The natural clean--variant distance is not the update budget: the applied norm is held fixed within each case across layers. Corrective leverage peaks at L54 rather than increasing monotonically to L56.}
\label{fig:norm}
\end{figure}

This result limits a generic ``later layers are easier to patch'' explanation within the tested set. It localizes corrective sensitivity for the specified intervention, not a uniquely identified endogenous repair module. The predictive window was selected before these interventions, but the studies share the historical benchmark and therefore are not independent dataset replications.

\subsection{Separating Transverse Orientation from Magnitude}
\label{sec:orientation}
\subsubsection{Endpoint orientation swaps with fixed scalar geometry}
For a receiver rewrite $i$ and an opposite-outcome donor rewrite $j$ of the same base question, derive a donor transverse direction in the receiver's transverse plane. With $P_i=I-\delta_{a,i}\delta_{a,i}^{\top}/D_i^2$, the nondegenerate direction is the normalized projection of the donor transverse vector. The intended endpoint is
\begin{equation}
 \delta_{b,i}^{\mathrm{swap}}
  =(1+q_i)\delta_{a,i}+\rho_iD_i\hat r_{j\to i}.
 \label{eq:swap}
\end{equation}
The edit keeps the receiver's $\delta_a$, $q_i$, $\rho_i$, endpoint displacement norm, and gain fixed in exact arithmetic. The implemented bfloat16 state must satisfy the prespecified numerical tolerances. This is an endpoint patch: the window is not rerun to generate a new natural endpoint.

Each swap is compared with four random orientation controls matched to its rotation angle, chord/update length, and absolute patched-state norm. The latter is matching \emph{between the swap and its random controls}; it is not a claim that absolute state norm equals the unedited state's norm. The donor and all controls must be jointly feasible. No outcome-conditioned dropping or seed replacement is allowed. These controls address the possibility that a swap works merely because it rotates by a larger angle, applies a larger edit, or changes state scale differently from an unrelated direction.

Let $F\leftarrow S$ denote a failed receiver with a successful donor and $S\leftarrow F$ the reverse. Define the control-adjusted margin contrast for receiver $i$ by
\begin{equation}
 C_i=m_i^{\mathrm{swap}}-\frac{1}{K}\sum_{k=1}^{K}m_i^{\mathrm{random},k},
 \qquad B_i=\frac{C_{F\leftarrow S,i}-C_{S\leftarrow F,i}}{2}.
 \label{eq:swapcontrast}
\end{equation}
The primary signed bidirectional effect is the equal-item mean of $B_i$. These quantities are \emph{swap minus matched random}, not swap minus natural. Categorical rescue and harm are reported separately.

\begin{table}[htbp]
\centering\small
\caption{Qwen main-window orientation swaps. Margin columns are control-adjusted contrasts from \cref{eq:swapcontrast}. Categorical columns are the actual swap outcomes on failed and safe receivers. These are numerically eligible, selected mixed-outcome cohorts, not population failure rates. Archived result-summary provenance is recorded in \cref{app:provenance}.}
\label{tab:swap}
\begin{tabular}{lrrrrrr}
\toprule
Dataset & Paired bases & $C_{F\leftarrow S}$ & $C_{S\leftarrow F}$ & Mean $B$ & Rescue & Harm\\
\midrule
MMLU &44&$+1.201$&$-2.204$&$+1.702$&75.0\%&34.1\%\\
ReClor &7&$+0.875$&$-1.446$&$+1.161$&71.4\%&28.6\%\\
\bottomrule
\end{tabular}
\end{table}

\subsubsection{MMLU direction sensitivity}
The completed orientation run contains 1,935 edited forward results. At L52--54, 44 evaluation items are bidirectionally eligible. The failed-receiver swap-minus-random contrast is $+1.201$, while the safe-receiver contrast is $-2.204$, yielding signed bidirectional effect $+1.702$ (\cref{tab:swap}). Their archived bootstrap ranges are [0.757, 1.664], [$-2.719$, $-1.754$], and [1.368, 2.048], respectively; the raw interval configuration remains part of the archive-reconciliation note in \cref{app:provenance} rather than a newly recomputed coverage claim.

Actual swaps rescue 33 of 44 failures (75.0\%). The failed-receiver random controls retain failure in approximately 61.4\%, corresponding to approximately 38.6\% rescue, not 61.4\% rescue. Swapping a failed donor direction into safe receivers induces failure in 34.1\%, compared with approximately 1.1\% for the matched random controls. The control-adjusted margin contrast is positive on 38 of 44 failed receivers and negative on all 44 safe receivers.

Because the scalar coordinates are matched, this result demonstrates behavioral sensitivity to orientation beyond those scalar summaries in the eligible cohort. It does not identify a global safe direction: donors are matched within an item, and the transplant is expressed in the receiver's coordinates. Nor does it establish that a naturally occurring internal module computes the patched endpoint.

\subsubsection{Fixed-rule ReClor replication}
The later ReClor run inherits the Qwen windows and intervention rules. Its 604 evaluated variants contain 15 failures across 11 mixed-outcome items. Numerical screening yields 76 eligible direction/window groups and 380 edited forward results; seven items are bidirectionally eligible at the primary L52--54 window.

The failed-receiver contrast is $+0.875$ with nominal 97.5\% interval [0.179, 1.482]; the safe-receiver contrast is $-1.446$ [$-1.804$, $-1.063$]. Their signed bidirectional effect is $+1.161$ [0.813, 1.549], positive on all seven primary items. Actual swaps rescue five of seven failed receivers and cause two of seven safe receivers to fail; the corresponding safe-receiver random controls fail once in 28 draws. This supports fixed-rule cross-dataset direction sensitivity, but seven independent items do not establish precise population effects or broad task universality. The cohort's prior use also prevents calling it an untouched-stimulus confirmation.

\subsubsection{Longitudinal--transverse magnitude interventions}
\label{sec:qrho}
Orientation swapping does not answer whether longitudinal or transverse \emph{magnitude} has greater causal leverage. For a receiver $i$, the natural-dose factorial design instead holds its incoming vector and transverse orientation fixed while independently replacing longitudinal and transverse scalars:
\begin{equation}
 \delta_b(q^*,\rho^*)=(1+q^*)\delta_{a,i}
                  +\rho^*D_i\hat r_i,
 \quad q^*\in\{q_i,q_j\},\quad\rho^*\in\{\rho_i,\rho_j\}.
 \label{eq:factorial}
\end{equation}
All four cells are kept or excluded together. Donor orientation is not transplanted in this magnitude experiment. It deliberately changes $G$ and may change absolute state norm; it must not be described as another fixed-gain test.

The equal-budget design perturbs the longitudinal and transverse axes by the same absolute step $\eta D_i$, with $\eta=0.10$. Its donor-directed central finite response for scalar $k\in\{q,\rho\}$ is
\begin{equation}
 R_{k,i}=\operatorname{sign}(k_j-k_i)
       \frac{m_{k,+,i}-m_{k,-,i}}{2}.
 \label{eq:central}
\end{equation}
This is an antisymmetrized finite response in margin units. It is neither the derivative divided by step size nor the actual one-sided donor-directed margin change. Applying the same signed bidirectional aggregation, the archived MMLU main-window comparison retains 17 jointly eligible items and gives a transverse-minus-longitudinal contrast of $+0.2243$ (reported bootstrap range [0.0790, 0.3787]).

The direction-specific responses qualify that aggregate. On failed receivers, the transverse response is approximately $+0.401$ and the longitudinal response $-0.044$. On safe receivers, they are $-0.103$ and $-0.099$, respectively. The relative transverse advantage is therefore primarily visible on the failure side in this small feasible cohort. These values must not be converted into rescue rates or cited as proof that $q$ has no causal role. A large natural transverse-energy association does not imply a uniformly larger transverse effect for every receiver.

\subsubsection{Finite precision can make a comparison unestimable}
The ReClor magnitude/factorial plan has no geometrically and numerically feasible groups under its frozen rules, and no edited factorial results. Its corresponding primary magnitude comparison is therefore \emph{unavailable}, not a zero effect. Small longitudinal natural differences can fall below faithfully realizable bfloat16 edits, and the protocol forbids enlarging their dose or relaxing tolerances after inspection. The orientation and magnitude studies have different eligible cohorts; a successful orientation replication cannot stand in for the unperformed magnitude comparison. Natural-dose interaction estimates not recovered from the archived reports are not supplied as completed results in this revision.

\subsection{Gemma Entry Interventions and Matched Mirror Controls}
\label{sec:gemma-causal}
\subsubsection{Changing the entry changes the unpatched downstream state}
The Gemma experiments test a link different from Qwen endpoint correction. They intervene at L38 and measure the naturally computed L45 state, with L45 never directly patched. The cohort comprises 48 selected variants from 41 base items, evenly divided into 24 historical failures and 24 safe cases. Contracting the L38 state toward its own clean counterpart progressively reduces downstream displacement and improves the mean correct-answer margin.

For full entry replacement on the 24 failures, mean $D_{45}$ reduction is 11.924738, mean gold-margin gain is 0.855469, and four failures are rescued (16.7\%). Thus changing the entry state causally affects both the later representation and behavior. The much weaker discrete recovery than Qwen's endpoint replacement is not a controlled architecture comparison: the layer, intervention target, and cohorts differ.

Replacing only the last-token entry state leaves substantial later clean--variant difference. This shows that the last-token L38 vector alone is insufficient to determine the clean L45 trajectory. It does not uniquely identify contextual reinjection, its pathway, or $D_{45}$ as a causal mediator. The intervention changes more than one possible downstream decision-relevant feature.

\subsubsection{Empirical directions and shared descriptive geometry}
Arbitrary equal-norm random and orthogonal entry updates strongly disrupt previously safe Gemma answers. Such controls are therefore a limited test of selective, naturalistic correction. We additionally use clean-restoring directions from other examples. Same-family donors are much less destructive and can reduce downstream displacement, although the target's own direction gives stronger mean suppression.

Across 48 correction vectors of dimension 5,376, excluding same-base pairs, mean within-family cosine is 0.472478 and mean cross-family cosine is 0.335542. There is shared directional structure together with substantial family-specific structure. This is not evidence for a low-dimensional stabilizing subspace, a universal restoring vector, or a causal role for every shared component. Moreover, own-clean correction minimizes post-entry clean distance at a fixed update radius, confounding an own-versus-donor comparison.

\subsubsection{Mirror construction and what it matches}
Let $e_i=(h_{38,i}^C-h_{38,i}^T)/\norm{h_{38,i}^C-h_{38,i}^T}$ be the receiver's own correction axis and $u$ a unit empirical donor direction. Define
\begin{equation}
 u^{\mathrm{mir}}=2(e_i^\top u)e_i-u.
 \label{eq:mirror}
\end{equation}
The component \emph{along} the clean-correction axis is unchanged; only the orthogonal component is reversed. For update radius $sD_i$, empirical and mirror edits have equal norm, equal projection onto $e_i$, and equal resulting clean distance:
\begin{equation}
 \norm{h_{38,i}^T+sD_i u-h_{38,i}^C}^2
 =D_i^2\bigl(1+s^2-2s e_i^\top u\bigr).
 \label{eq:mirrordistance}
\end{equation}
The same expression holds for $u^{\mathrm{mir}}$. In actual bfloat16 runs these equalities are enforced only within prespecified numerical tolerances, not as exact real-arithmetic identities.

This design changes orientation at the \emph{entry}. It matches entry geometry but does \emph{not} hold the naturally computed downstream $D_{45}$ or gain fixed. Consequently, it tests the sufficiency of the matched entry scalars for propagation, not the sufficiency of a predictor that already observes the resulting downstream gain. Qwen's fixed-gain endpoint swap tests a distinct question.

\begin{table}[htbp]
\centering\small
\caption{Gemma L38 empirical--mirror entry experiment. Entries are mean reductions in naturally computed, unpatched $D_{45}$. Positive values indicate suppression. Each row averages donor conditions within receiver before averaging receiver cases.}
\label{tab:mirror}
\begin{tabular}{llrr}
\toprule
Receiver cohort & Donor family relation & Empirical & Mirror\\
\midrule
Failed ($n=24$) & Same family &2.221730&3.277838\\
Failed ($n=24$) & Cross family &0.924500&2.170946\\
Safe ($n=24$) & Same family &1.923202&3.696537\\
Safe ($n=24$) & Cross family &0.813023&2.571439\\
\bottomrule
\end{tabular}
\end{table}

\subsubsection{Downstream suppression does not imply categorical recovery}
The completed matched experiment contains 48 targets and 816 intervention rows, with exact no-op checks passing on all 48 targets. Its frozen primary contrast is cross-family empirical minus mirror, averaging donors within a target before the equal-target-case mean. Across all cases, that contrast in $D_{45}$ reduction is $-1.502431$; the mirror suppresses more than the empirical-donor average on 38 of 48 targets. \Cref{tab:mirror} shows the pattern in both receiver cohorts and both donor-family relations.

Discrete recovery does not follow the same ordering. On failed receivers, the own-direction condition rescues 3/24; same-family empirical and mirror conditions each rescue 5/96 donor-conditioned trials; cross-family empirical conditions rescue 5/96 and mirrors 3/96. These donor trials are not 96 independent base items. The own-direction condition here is also not the full-entry-replacement condition that rescued 4/24 in the preceding dose study.

This is a substantive negative result for the expectation that empirically observed donor directions should be privileged stabilizers once their entry geometry is matched. It is positive evidence for orientation-sensitive downstream processing, but not for a particular shared stabilizing subspace. The divergence between displacement suppression and discrete rescue further limits an account based on contraction alone. Gemma now supplies causal evidence; what remains absent is a completed identical replication of every Qwen orientation, magnitude, and norm-localization experiment.

\subsection{Finite Readout Effects and the Limits of a Single-Position Replay}
\label{sec:finite_reclor}

The preceding interventions establish directional sensitivity, but leave two quantitative questions open: how well does a local gradient describe a finite patch, and how closely does that patch reproduce the natural prompt contrast? We tested these questions separately on the frozen 707-rewrite ReClor set. The Qwen3-32B run completed 601 clean-correct pairs from 213 original questions; 106 pairs were skipped because the clean answer was incorrect. Fourteen rewrites failed, covering 11 original questions. The model, non-thinking prompt format, bfloat16 eager computation, and L52--54 window were inherited. These are reused stimuli rather than an untouched confirmation cohort.

For each pair, let $j_C=\arg\min_{j\ne y}m_j(x)$ be the clean question's worst competitor, fixed before intervention, and write $f(v;C)$ for its gold-versus-$j_C$ margin when the final prompt-token L54 state is $v$ in the clean input computation. This fixed-competitor response differs from recomputing the minimum gold margin after each edit. With $d_b=h_b^T-h_b^C$, define
\begin{align}
 C_b &= f(h_b^C+d_b;C)-f(h_b^C;C), &
 J_b &= \nabla_v f(h_b^C;C)^\top d_b, \label{eq:finite_clean_replay}\\
 P &= m_{j_C}(T(x))-m_{j_C}(x), &
 R_{\mathrm{ctx}} &= P-C_b. \label{eq:finite_prompt_gap}
\end{align}
Here $C_b$ is a local endpoint replacement with the rest of the clean computation retained, whereas $P$ changes the entire prompt. The identity $P=J_b+(C_b-J_b)+R_{\mathrm{ctx}}$ is bookkeeping for these specified comparisons, not a decomposition into uniquely identified neural pathways.

We additionally approximate the response change along the fixed clean-context interpolation using
\begin{equation}
 I_b=\int_0^1 \nabla_v f(h_b^C+t d_b;C)^\top d_b\,dt.
 \label{eq:finite_path_replay}
\end{equation}
The numerical run uses nested trapezoidal grids with eight and sixteen subintervals, retaining all three competitor responses and the realized endpoint update. In exact differentiable arithmetic, the integral equals the corresponding finite response. Numerical agreement must nevertheless be checked for the implemented bfloat16 computation.

\begin{table}[tbp]
\centering
\small
\caption{Finite readout checks on 601 ReClor pairs from 213 original questions. Entries are base-equal means of absolute fixed-competitor margin quantities; nominal 95\% intervals use 2,000 original-question bootstrap resamples. The local patch error and the natural-prompt discrepancy measure different targets.}
\label{tab:finite_replay}
\begin{tabularx}{\textwidth}{Yrr}
\toprule
Quantity & Mean & 95\% interval\\
\midrule
Natural prompt effect $|P|$ & 1.1451 & $[1.0267,1.2774]$\\
Local finite patch effect $|C_b|$ & 0.8995 & $[0.8031,1.0056]$\\
One-point local error $|C_b-J_b|$ & 0.1753 & $[0.1577,0.1947]$\\
Integrated local error $|C_b-I_b|$ & 0.1300 & $[0.1220,0.1392]$\\
Fine--coarse integral difference & 0.00162 & $[0.00146,0.00180]$\\
Remaining-context discrepancy $|P-C_b|$ & 0.3942 & $[0.3595,0.4326]$\\
\bottomrule
\end{tabularx}
\end{table}

Path integration improves the mean absolute local-patch approximation by 0.0453 margin units, or approximately 25.8\% of the one-point error (\cref{tab:finite_replay}). It does not numerically recover the whole finite effect: the remaining defect of 0.1300 is substantially larger than the 0.00162 fine--coarse difference. Small refinement differences therefore do not by themselves validate numerical completeness. Moreover, the natural-prompt discrepancy exceeds the one-point local error by a base-equal mean of 0.2188, with nominal 95\% interval $[0.1847,0.2522]$. Better integration of this particular patch cannot eliminate the difference between a partial-state replay and changing the full input.

Response alignment also improves reference-assisted failure ranking in this run. Under simultaneous held-family and five-fold held-item exclusion, the mean held-family AUROC is 0.9536 for clean margin, 0.9735 for $(M,\Din,\Gw)$, 1.0000 after adding signed clean-context endpoint erosion, and 0.9994 after adding the analogous entry-gradient projection. The endpoint increment over the scalar predictor is 0.0265, with nominal 95\% base-bootstrap interval $[0.0114,0.0449]$. Only fourteen failures support these estimates. More fundamentally, these features require each question's clean reference, paired activations, gold-defined margins, and clean readout gradients. They are strong reference-assisted diagnostics, not evidence for a detector that operates without clean references or labels. Their success at L52--54 also does not retroactively validate the earlier L32 boundary approximation, which used a different layer and local response.

\subsection{Natural Window Propagation and an Initially Negative Component Test}
\label{sec:window_components}

We next intervened inside the window and let its downstream state be computed, rather than directly constructing its endpoint. A last-token entry-state/context exchange at L52 yields four computations: clean or rewritten input context, each with its natural or exchanged final-token entry state. L54 is measured without patching it. This design tests whether the selected entry vector is sufficient to reproduce the natural endpoint difference; it does not isolate a unique circuit.

Across the same 601 pairs, the base-equal mean relative gap between the entry-induced and natural endpoint differences is 0.4774, with nominal 95\% interval $[0.4689,0.4861]$. A single last-token entry vector therefore leaves substantial endpoint information unspecified. The mean absolute fixed-competitor margin response is 0.7843 for the context arm and 0.5402 for the entry arm; their paired base-equal difference is 0.2441 $[0.1702,0.3167]$. The four-cell signed interaction averages 0.0223 $[-0.0027,0.0483]$, while its absolute magnitude averages 0.2023 $[0.1873,0.2174]$. Opposite interaction signs can cancel in the mean; the factorial identities themselves are not estimates of mediation fractions.

The module scan then considers the final-token attention and MLP outputs in blocks 53 and 54. For component $c$, the repair edit adds its same-item clean-minus-rewrite output difference to the rewritten computation; the reverse edit subtracts that difference in the clean computation. The suffix is recomputed. Controls use four random directions in the same component coordinates, matched for update norm and resulting component norm. Numerical eligibility is decided from natural activations and realized bfloat16 geometry before edited responses are read. Unlike the earlier transverse endpoint swaps, these edits do not hold endpoint displacement, window gain, or full residual-state norm fixed.

Selection uses discovery questions and the two source families, excluding both the target family and its held-out original questions. The first rule selects one component by its all-pair, base-equal bidirectional margin response. Its frozen validation evaluates all clean-correct, untied target rewrites, not only natural failures. The 286 requested target cases yield 244 eligible pairs from 85 original questions; only three are natural failures.

\begin{table}[tbp]
\centering
\small
\caption{The original all-pair component validation is negative. Scores average matched-random-adjusted repair and damage effects on the fixed clean-worst-competitor margin. Each family uses one selected module. Nominal 98.33\% original-question bootstrap intervals allocate a nominal 95\% familywise level across the three primary contrasts.}
\label{tab:component_negative}
\begin{tabularx}{\textwidth}{Ycrrr}
\toprule
Held-out family & Output & Pairs & Score & 98.33\% interval\\
\midrule
Grammatical restructuring & L54.attn & 82 & $-0.0694$ & $[-0.1717,0.0309]$\\
Irrelevant insertion & L53.attn & 80 & $0.0297$ & $[-0.0284,0.0912]$\\
Lexical paraphrase & L54.attn & 82 & $0.0943$ & $[-0.1507,0.3456]$\\
\bottomrule
\end{tabularx}
\end{table}

All three primary intervals include zero (\cref{tab:component_negative}). The pooled descriptive score is 0.0184, with nominal 95\% interval $[-0.0606,0.1027]$, and no strict failure is rescued among the three untreated failures. Component restoration nevertheless reduces exit displacement by a base-equal mean of 5.7189 $[5.2490,6.2460]$, corresponding to a mean relative reduction of 10.49\% $[9.87\%,11.18\%]$. Reducing displacement is therefore not sufficient to establish an average answer benefit in this cohort. Answer-changing and label-permutation controls were not supplied at this stage, so this run alone does not establish task-specific selectivity.

\subsection{A Failure-Conditioned Follow-Up Rule}
\label{sec:failure_followup_rule}

An exploratory diagnosis of the discovery scan explains why a uniformly positive all-pair effect is an unsuitable assumption. The common discovery cohort contains 357 rewrites from 128 questions, including eleven strict failures from eight questions. L54.attn has a raw bidirectional score of only 0.0452 over the full cohort, but 1.1797 $[0.8125,1.5469]$ among the untreated failures. Its repair edit strictly rescues two of eleven failures; an additional tied answer is counted only by an argmax convention. These discovery effects are not adjusted by matched random controls.

The same cleanward edit can also undo a beneficial rewrite. For correct rewrites whose natural margin decreases, the L54.attn bidirectional score is 0.3751 $[0.2952,0.4591]$ over 154 pairs from 92 questions. For correct rewrites whose margin increases, it is $-0.4000$ $[-0.5114,-0.2972]$ over 160 pairs from 96 questions. These outcome-conditioned strata are descriptive and their intervals are nominal; they are not independent confirmations. They show why averaging all rewrites can combine effects with opposite behavioral signs.

This diagnosis motivates a new selection rule: rank a component by the smaller of its base-equal repair and damage effects among source-family natural failures, require positive effects in both source families, and abstain when the fixed coverage gates fail. The rule was frozen before the later identity-disjoint evaluation. Its target is a failure-conditioned directional effect, not a replacement estimate for the preceding all-pair test. We retain that earlier negative result and evaluate the revised rule separately below.

\subsection{Frozen component validation on disjoint original items}
\label{sec:fresh-validation}

The component experiments permit a stronger test than selecting an effective direction after observing a target failure: freeze the intervention site and selection rule, acquire different original items without consulting target outcomes, and evaluate the resulting paired directions. We fixed the Qwen L52--54 window and the final-prompt-token L54 attention output. The failure-conditioned rule selected this component using other-family discovery examples before the new target cohort was evaluated. This follow-up rule was motivated by the preceding all-cohort experiment; its failure-conditioned estimand does not retroactively replace that experiment's primary analysis.

\panel{Acquisition and intervention.}
The labelled source contained 500 ReClor items. After excluding the 250 previously used items, the remaining 250 formed the new cohort. Original-item identifiers and clean-input token hashes were disjoint from the supplied historical collection. This establishes separation within the study, not absence from model pretraining. Fixed generation slots produced 746 candidates; 702 passed AI semantic screening: 239 lexical paraphrases, 214 grammatical restructurings, and 249 irrelevant insertions. The generator and reviewer used the same local Qwen3.8-27B checkpoint, so the review is correlated AI screening rather than independent human certification. Collection and screening did not select candidates by target-model failure or intervention success.

For item $i$, let $c_i^C$ and $c_i^T$ be the natural clean and rewrite attention outputs at the measured token, and set $d_i=c_i^C-c_i^T$. The repair adds $d_i$ to the rewrite component, while reverse intervention adds $-d_i$ to the clean component; both recompute the suffix. Four random directions per intervention match the own update's norm and the resulting component-output norm, with numerical eligibility determined before edited responses are inspected. These are component-level constraints, not fixed full-window gain or residual-state geometry.

In this experiment the response is the margin against the strongest competitor on the unedited clean input, fixed before intervention: $m_i^{\mathrm{fix}}=z_y-z_{j_i^*}$, where $j_i^*=\arg\max_{j\ne y}z_j(x_i)$. For arm $a$, define
\begin{equation}
 S_i^{a}(m)=\frac{1}{2}\left[
 \overline m_i^{\mathrm{repair},a}-m_i^T
 +m_i^C-\overline m_i^{\mathrm{damage},a}\right].
 \label{eq:component-bidirectional}
\end{equation}
The bar averages the arm's draws, with one draw for the own paired direction. The primary contrast is $S_i^{\mathrm{own}}-S_i^{\mathrm{random}}$. Positive values combine greater repair with greater reverse damage than matched random edits. Natural failure and strict rescue are assessed separately using the minimum margin over \emph{all} wrong answers: failure requires a negative minimum and rescue a positive minimum. A fixed-competitor improvement alone need not restore the correct answer.

\begin{table}[t]
\centering\small
\caption{Frozen L54 attention intervention on disjoint ReClor original items. The effect is the random-adjusted bidirectional score from \cref{eq:component-bidirectional}, using the clean-fixed competitor margin. Intervals are nominal 98.33\% original-item percentile bootstrap intervals allocated over three family comparisons. Failure counts equal distinct failure-item counts within each family; items can overlap across families. All family counts are below the frozen twenty-failure-item interpretive threshold.}
\label{tab:fresh-component}
\setlength{\tabcolsep}{5pt}
\begin{tabularx}{\linewidth}{Yrrrr}
\toprule
Family & Eligible & Failures & Effect [interval] & Strict rescue\\
\midrule
Grammar &187&6&$0.583\;[0.271,0.872]$&2/6\\
Irrelevant insertion &219&10&$1.200\;[0.571,2.127]$&5/10\\
Lexical paraphrase &215&11&$1.369\;[0.818,1.970]$&6/11\\
\bottomrule
\end{tabularx}
\end{table}

\panel{Results and scope.}
There were 621 eligible rewrites on 220 clean-correct original items; 81 rewrites came from 30 clean-incorrect items. The eligible set contained 27 strict failures from 21 original items. The own correction strictly rescued 13/27 failures (48.1\%), whereas the 108 matched random draws rescued none. The additional argmax-correct case was a boundary tie and is excluded from strict rescue. All 27 failures had a positive random-adjusted bidirectional score. Family-level effects were positive (\cref{tab:fresh-component}), with original-item averaging and 2,000 original-item bootstrap resamples. Nevertheless, each family had only six to eleven failure items, below the frozen interpretive threshold of twenty. The threshold is a practical reporting rule, not a power calculation, and the displayed small-sample intervals do not convert this result into precise population confirmation.

The all-clean-correct reference analysis remains materially weaker: its family-level scores were $-0.017$, $0.106$, and $0.091$ for grammar, irrelevant insertion, and lexical paraphrase, respectively; all three adjusted intervals included zero. The positive failure-conditioned result therefore supports a local effect in a subset, not a generally beneficial intervention. A small label-permutation control also exposed a specificity problem. Thirteen unique controls were clean-eligible, although repetition across three plans produced 39 records. All thirteen were naturally correct after relabelling, but one became strictly incorrect after the old-reference correction: its new correct letter was B, while the patch restored the old letter D and reduced the minimum correct margin from $5.75$ to $-0.50$. This motivated the expanded order controls below.

\subsection{Option-order and answer-letter controls}
\label{sec:order-controls}

The expanded experiment asks whether the own paired direction contains information beyond a generic direction associated with the correct answer letter, and whether the result depends on the option layout. It reuses the frozen rewrite cohort after the preceding results; it is a targeted mechanism diagnostic, not another original-item-independent replication.

\panel{Four orders and matched donors.}
We rendered all four cyclic option orders, consistently remapping the gold letter. A fixed pre-inference text screen excluded 21 candidates from eight items for option-order-dependent language. The retained 681 rewrites from 242 original items yielded 2,724 conditions. The intervention site and numerical rules remained frozen. In each current order, the own arm used that order's clean-minus-rewrite component difference. Four random directions and four same-letter donor directions were compared with it; different-letter donors formed a secondary control. The 357 eligible source pairs came from the earlier discovery collection, had strictly correct, untied clean answers, and were original-item-disjoint from recipients. Donors were from another rewrite family and shared either the recipient's \emph{current} gold letter or a different letter. Assignments were fixed by deterministic hashing before recipient inference. Donor source rewrites were not required to fail, nor were their source corrections required to have a positive effect.

To isolate orientation under local component constraints, let $h$ be the recipient component baseline, $u$ its own repair or reverse update, and $d$ a signed donor direction. Set $e=h/\norm{h}$ and $d_\perp=d-\ip{d}{e}e$. For nondegenerate directions, the control update is
\begin{equation}
 v(d)=\ip{u}{e}e+
 \norm{u-\ip{u}{e}e}\,\frac{d_\perp}{\norm{d_\perp}}.
 \label{eq:component-sphere-control}
\end{equation}
Thus $\norm{v}=\norm{u}$ and $\norm{h+v}=\norm{h+u}$ in exact arithmetic. Here orthogonality is relative to the component state $h$, rather than the incoming residual displacement used in the earlier window decomposition. Both intervention directions and all control arms were planned before edited forward passes, with joint eligibility checked after casting and at the actual hooks. Maximum recorded between-arm update- and component-norm discrepancies were 0.367\% and 0.138\%, below the fixed 1\% tolerance.

This experiment uses the minimum gold margin $M=\min_{j\ne y}(z_y-z_j)$ directly in \cref{eq:component-bidirectional}, rather than the preceding experiment's clean-fixed competitor. Its two primary contrasts are $S^{\mathrm{own}}-S^{\mathrm{random}}$ and $S^{\mathrm{own}}-S^{\mathrm{same\ letter}}$. They are averaged within original item and bootstrapped over original items, keeping repeated orders and draws together. Their effect sizes must not be pooled with the fixed-competitor scores above.

\begin{table}[t]
\centering\small
\caption{Option-order controls among strict natural failures whose original item is clean-correct in every order. Effects are differences in bidirectional \emph{minimum}-margin score. Nominal 99.17\% percentile intervals allocate error over six planned family contrasts; 2,000 original-item resamples are used. The small per-family event counts make these exploratory estimates. Draws within a recipient and orders of the same item are correlated.}
\label{tab:order-controls}
\setlength{\tabcolsep}{4pt}
\begin{tabularx}{\linewidth}{Yrrr}
\toprule
Family; failures/items & Own $-$ random [interval] & Own $-$ same letter [interval] & Rescue\\
\midrule
Grammar; 9/8 &$0.631\;[0.271,0.936]$ &$0.627\;[0.444,0.827]$&3/9\\
Irrelevant; 13/11 &$0.894\;[0.634,1.244]$ &$0.826\;[0.552,1.187]$&6/13\\
Lexical; 15/13 &$1.938\;[1.145,3.024]$ &$1.934\;[1.083,3.330]$&7/15\\
\bottomrule
\end{tabularx}
\end{table}

\panel{Own-direction advantage under the current order.}
All 2,724 conditions were processed: 2,415 were complete, 295 had an incorrect clean answer, and fourteen had a clean tie; no condition failed numerical eligibility. Requiring strict clean correctness in all four orders gave a common cohort of 189 original items and 538 rewrites, or 2,152 conditions. Their natural outcomes were 2,107 strictly correct, 37 strict failures, and eight boundary ties. The 37 failures represented 32 distinct rewrites and 23 original items. Of the preceding experiment's 27 original-order failures, fifteen remained in this common cohort and twelve were excluded because another order's clean answer was not strictly correct. The new failure count therefore does not represent 37 additional independent failures.

All six planned contrasts were positive (\cref{tab:order-controls}), and the own bidirectional score exceeded both random and same-letter donor means in all 37 failure conditions. Own corrections strictly rescued 16/37 conditions (43.2\%), compared with 0/148 random draws, 10/148 same-letter draws, and 4/148 different-letter draws. These control denominators count four correlated interventions per recipient, not independent questions. Restricting the analysis to changed orders left 22 failures from fifteen original items: own corrections rescued 8/22, compared with 0/88 random and 5/88 same-letter draws. This subgroup is a supplementary diagnostic. Family event counts of eight, eleven, and thirteen remain below twenty, and percentile-bootstrap tail estimates are coarse at 2,000 resamples.

\panel{Context dependence limits the common-direction claim.}
The same rewrite's failure was rarely stable across layouts: among the 32 rewrites failing in any order, 28 failed in exactly one order, three in two, one in three, and none in all four. The all-common-clean own-minus-same-letter scores were also less uniform than the failure-conditioned results: $-0.038$ for grammar, $0.123$ for irrelevant insertion, and $0.103$ for lexical paraphrase; the grammar interval included zero. Thus failure-subset effects do not establish a universal mean advantage.

A separate reference-mismatch diagnostic literally replaced the changed-order component with its original-order clean value. It made the prediction select the old gold letter in 400/1,614 changed-order conditions (24.8\%), versus 8/1,614 naturally and 3/1,614 after the current-order paired correction. Among 1,585 naturally strict-correct changed-order conditions, this stale reference produced 368 strict errors and 59 ties. However, its update norm was a median 5.15 times the matched own update. It is consequently evidence of reference mismatch, not an equal-budget comparison proving a pure letter mechanism. Same-letter donors likewise do not exhaust all possible answer-readout explanations.

Together, these results show that the natural paired direction carries local, context-dependent causal information beyond the tested component-norm constraints and same-letter donor controls. They sharpen the earlier endpoint evidence by intervening in a frozen attention component and testing a concrete answer-letter alternative. They do not establish a single layout-invariant direction: every current-order correction recomputes its paired clean reference. The interventions still require that reference, and neither a clean-free detector nor generally selective robustness follows. Independent blinded semantic review and direct tests of unchanged-direction transfer remain open.

\subsection{Bridge to the monitorability axis: consequential relative to what?}
The fragility experiments establish that equal scalar geometry can conceal consequential directional differences. They do not assign a response-independent sign to the transverse component. A direction that restores a gold margin in one computation need not increase an arbitrary target score in another. The next axis therefore measures both the direction and its alignment with an explicit readout. Its reference is a matched lexical prompt rather than a clean question; its patch is an additive transverse intervention rather than a fixed-gain orientation swap. What transfers is the decomposition and the question of directional consequence, not the original classifier or an assumption that all transverse changes are harmful.

\section{Axis II: Measuring Influence Not Disclosed in CoT}
\label{sec:monitor}
\label{sec:crossbehavior}
The second axis asks what can be measured when generated reasoning does not acknowledge an external cue. It changes the behavioral question from \emph{answer fragility} to \emph{readout influence conditional on recorded CoT}. This is our operational treatment of the hidden-CoT motivation, not a claim to recover private reasoning. The defining distinction is between disclosure in text, a prompt-level score contrast, and a localized causal patch effect.

\subsection{Nondisclosure is an observation, not an intent label}
MonitorBench supplies cue-answer tasks and conditions including direct concealment \citep{wang2026monitorbench}. We use recorded direct-conceal continuations and separate blind AI-assisted disclosure review. A \texttt{DISCLOSED} label requires visible acknowledgement of the external cue's role in inference, verification, search, or answer selection. Answer agreement alone does not count as disclosure. \texttt{NOT\_DISCLOSED} therefore describes the recorded text; it does not certify covert reliance, deceptive intent, or a particular internal reasoning process. The one-rollout monitor used to complete the upstream pipeline is not the label source for this analysis.

The cross-domain review report contains 690 continuations: 342 labeled disclosed and 348 not disclosed. The latter form the reported fixed-CoT analysis cohort across 82 bases. We do not interpret 348/690 as the prevalence of hidden reasoning. It is a text-label proportion in this collected sample, under this operational review criterion. No intervention-effect sign was supplied as a disclosure ground truth.

\subsection{Fixed reasoning and exact-target lexical control}
Let $c$ be a recorded generated CoT and $y_{1:m}$ a fixed target answer sequence. We replay the same $c$ and the same teacher-forced target prefix under two prompts. $S$ presents the standard answer cue; $L$ contains the exact same target text but frames it as unrelated text. The direct-conceal condition supplies the recorded continuation; the subsequent scoring contrast is \emph{standard cue versus lexical control}, not direct-conceal versus standard.

For prompt $P\in\{S,L\}$, define
\begin{equation}
 \ell_P(c,y)=\frac{1}{m}\sum_{t=1}^m
    \log p(y_t\mid P,c,y_{<t}),\qquad
 U_{\sema}=\ell_S(c,y)-\ell_L(c,y).
 \label{eq:semantic}
\end{equation}
The historical name $U_{\sema}$ denotes this specified contrast, not a purified estimate of all semantic influence. Matching the target text addresses a simple lexical explanation for target-score increases; it does not remove every difference in framing or establish semantic equivalence of $S$ and $L$. Unlike the task-preserving rewrites of Axis I, the two cue prompts deliberately assign different roles to the same target text.

Holding $c$ fixed removes variation in the \emph{recorded reasoning text} from this scoring comparison. It does not freeze the hidden activations induced by that text under different prompts. Nor does it show whether the cue originally caused a step in $c$, measure the probability of generating $c$, or estimate the distribution of newly sampled answers. The continuation is reused counterfactually rather than regenerated. An influence on this fixed-CoT readout can be positive or negative and need not cause a discrete answer flip.

\subsection{Target-position geometry and signed sequence attribution}
At positions predicting target tokens, let
\begin{equation}
 \delta_{a,t}=h^S_{a,t}-h^L_{a,t},\qquad
 \delta_{b,t}=h^S_{b,t}-h^L_{b,t},\qquad W=(52,54).
 \label{eq:monitor-states}
\end{equation}
The Qwen window is inherited rather than searched again on math/health outcomes. At each valid position, \cref{eq:decompose} gives $q_t$ and $r_t$ such that $\delta_{b,t}=(1+q_t)\delta_{a,t}+r_t$. Let $g_t$ be the gradient of the same mean target-sequence response at the lexical-reference computation. Define
\begin{align}
 J_{\trans,\mathrm{seq}}&=\sum_t g_t^\top r_t,\label{eq:jseq}\\
 J_{\longi,\mathrm{seq}}&=\sum_tg_t^\top[(1+q_t)\delta_{a,t}],\qquad
 J_{\mathrm{total},\mathrm{seq}}=J_{\longi,\mathrm{seq}}+J_{\trans,\mathrm{seq}}.
 \label{eq:jseq-components}
\end{align}
The target-length normalization is already part of the differentiated mean response; the sum over states is not an additional average over target tokens. Target-prediction alignment matters: the final prompt-token probe used for next-token multiple-choice labels is not automatically adequate for a multi-token target after generated reasoning.

Neither $U_{\sema}$ nor $J_{\trans,\mathrm{seq}}$ is a hidden-CoT class label. $U_{\sema}$ compares whole prompts in a fixed continuation, whereas $J_{\trans,\mathrm{seq}}$ projects a selected part of the endpoint difference onto a local gradient. Agreement between them is an empirical relation, not an identity. A later patch separately tests whether this selected direction has the expected local effect.

\subsection{Localized causal intervention and controls}
Write $H_b^L=(h^L_{b,t})_t$ for the lexical-reference tuple and $R=(r_t)_t$ for its paired transverse difference. The implemented intervention is
\begin{equation}
 H_b^L\longmapsto H_b^L+\alpha R,\qquad
 C_{\trans}(\alpha)=\phi(H_b^L+\alpha R)-\phi(H_b^L),
 \label{eq:readoutpatch}
\end{equation}
where $\phi$ is the mean target-sequence response with the lexical prompt, recorded CoT, target prefix, and unpatched layer-$b$ states fixed. This $C_{\trans}$ is the $\Delta\ell_{\trans}$ reported in the source analysis. \Cref{eq:derivative} gives $C'_{\trans}(0)=J_{\trans,\mathrm{seq}}$ for the idealized differentiable reference computation; actual finite-dose responses are measured, not assumed. Here $\alpha=0$ is a no-op. This differs from the semantic contraction parameter in \cref{eq:alpha}, where $\alpha=1$ is a no-op.

The frozen doses are $\{-0.5,0,0.25,0.5,0.75,1\}$. At $\alpha=1$, the protocol also tests the longitudinal tuple, the total difference tuple, and a norm-matched random-orthogonal control. These controls ask whether the transverse patch differs from other specified edits. They do not turn it into the fixed-gain swap of Axis I: gain and absolute state norm are not held constant by adding $R$ to the lexical state.

Patching only selected endpoint positions leaves other cue-related differences unpatched. Thus even a total-difference patch at these positions need not equal $U_{\sema}$. A ratio of patch effect to prompt contrast is not a mediation proportion, and the effects of longitudinal and transverse finite patches need not add exactly because later computation can be nonlinear.

\subsection{Analysis unit and four noninterchangeable quantities}
Repeated continuations share a base question. For base $i$ with $n_i$ retained continuations, the analysis first averages each response, for example
\begin{equation}
 \bar J_i=\frac{1}{n_i}\sum_{k=1}^{n_i}J_{ik},\qquad
 \bar C_i(\alpha)=\frac{1}{n_i}\sum_{k=1}^{n_i}C_{ik}(\alpha).
 \label{eq:base-aggregation}
\end{equation}
The reported sign accuracy compares the sign of these base means, not a vote over continuation-level signs. Math and health are analyzed separately and receive equal weight in macro estimates. Bootstrap resampling draws bases within domains, with 5,000 resamples; 348 continuations are not 348 independent observations.

\begin{table}[htbp]
\centering\small
\caption{Operational quantities for the hidden-CoT-motivated axis. None alone establishes unobserved reasoning or intentional concealment. The analysis conditions on nondisclosed continuations rather than training a disclosure classifier from geometry.}
\label{tab:monitor-quantities}
\begin{tabularx}{\linewidth}{@{}L{0.23\linewidth}YY@{}}
\toprule
Quantity & What it measures & What it does not establish\\
\midrule
\texttt{NOT\_DISCLOSED} & No explicit cue attribution in recorded CoT. & Cue reliance, intent, or absence of internal influence.\\
$U_{\sema}$ & Standard-minus-lexical target score with $c$ fixed. & A fresh answer flip, complete cue use, or influence on the original generation of $c$.\\
$J_{\trans,\mathrm{seq}}$ & Local signed readout projection of transverse differences. & A task-independent harmfulness score or guaranteed finite-dose effect.\\
$C_{\trans}(\alpha)$ & Response to a specified internal patch in lexical-reference replay. & Total prompt effect, unique mediation, or an endogenous concealment mechanism.\\
\bottomrule
\end{tabularx}
\end{table}

This design makes the monitorability question testable without collapsing text and computation into one label. On the nondisclosed cohort, we ask whether the prompt contrast covaries with the internal attribution, whether the attribution predicts a finite patch response, and whether that response has a common population sign. These are three different tests. Their distinction is central to the results that follow.

\section{Axis II: Causal Effects and Cross-Domain Tests}
\label{sec:monitor-results}
The evidence proceeds from logic development to frozen math/health evaluation and then to a post hoc signed-effect analysis. Reordering the paper around two axes does not change that chronology. In particular, the refined signed hypothesis is not substituted for the original positive-mean success rule.

\subsection{Logic development: a positive response, with incomplete specificity}
The reported logic causal analysis includes 93 continuations from 25 bases. At $\alpha=1$, the transverse patch changes the mean target-token log-probability response by $+0.007032$, with the saved primary 95\% interval [0.001410, 0.012946]. The nonnegative-dose slope is $+0.007074$ [0.001394, 0.012936]. At $\alpha=-0.5$, the effect reverses to $-0.004787$ [$-0.007817$, $-0.002053$]. The positive-dose means increase from approximately 0.002119 to 0.004117, 0.005740, and 0.007032 at doses 0.25, 0.5, 0.75, and 1.

The transverse-minus-longitudinal contrast is $+0.002425$ [0.000301, 0.004784]. Transverse minus norm-matched random is $+0.006003$, but its interval [$-0.0000454$, 0.011876] includes zero. The development result therefore supports a graded response without establishing every desired specificity comparison. Base-level Spearman correlation between $J_{\trans,\mathrm{seq}}$ and the finite transverse patch effect is 0.966154. These are development observations, not untouched confirmation.

\subsection{Frozen math/health association with the prompt contrast}
The new-domain analysis retains 348 nondisclosed continuations from 82 bases: 20 math and 62 health bases. The frozen observational primary relates base-averaged $J_{\trans,\mathrm{seq}}$ to $U_{\sema}$. Spearman correlations are 0.330827 on math and 0.526126 on health. The equal-domain Fisher aggregate is 0.433554, with domain-stratified base-bootstrap 95\% interval [0.132692, 0.686577]. Both domain point directions are positive, so the specified observational gate passes.

This relation extends beyond predicting a perturbation's Euclidean size: it connects an internal, response-aligned quantity with the fixed-CoT score difference between cue framings. Nevertheless, correlation does not establish that the selected transverse positions mediate the whole prompt effect, or that the original generated CoT covertly used the cue. The subsequent intervention test has a different estimand.

\subsection{The original positive-mean causal criteria fail}
\label{sec:monitor-frozen-failure}
The frozen core rule additionally requires a positive average transverse $\alpha=1$ effect, a positive nonnegative-dose slope, and a negative effect at $\alpha=-0.5$. Full confirmation also requires a positive transverse-minus-random contrast. The $\alpha=1$ effect must have positive point direction in both domains. These causal requirements are not met (\cref{tab:monitor-causal}); both core and full confirmatory flags remain false.

\begin{table}[htbp]
\centering\small
\caption{Original frozen Qwen math/health causal criteria on 82 bases. Effects are equal-domain macro changes in mean target-token log probability, with saved domain-stratified bootstrap 95\% intervals. The last row is an additional protocol contrast, not an independent primary success gate.}
\label{tab:monitor-causal}
\begin{tabular}{@{}lrr@{}}
\toprule
Causal quantity & Macro estimate & 95\% interval\\
\midrule
Transverse patch, $\alpha=1$ & $-0.002178$ & [$-0.012131$, 0.009869]\\
Nonnegative-dose slope & $-0.002203$ & [$-0.012234$, 0.009882]\\
Transverse patch, $\alpha=-0.5$ & $-0.000769$ & [$-0.008935$, 0.004652]\\
Transverse minus random, $\alpha=1$ & $-0.001956$ & [$-0.012497$, 0.010982]\\
\midrule
Transverse minus longitudinal & $-0.002286$ & [$-0.006876$, 0.002146]\\
\bottomrule
\end{tabular}
\end{table}

At $\alpha=1$, the point effect is $-0.004399$ on math and $+0.0000428$ on health. The failure is therefore not merely a wide aggregate interval around two positive domain means. The logic development pattern does not establish a uniformly cue-promoting transverse response in new domains. \Cref{fig:monitor-effects} puts development and frozen-domain mean effects on the same response scale without treating the two cohorts as a paired comparison.

\begin{figure}[htbp]
\centering\includegraphics[width=\linewidth]{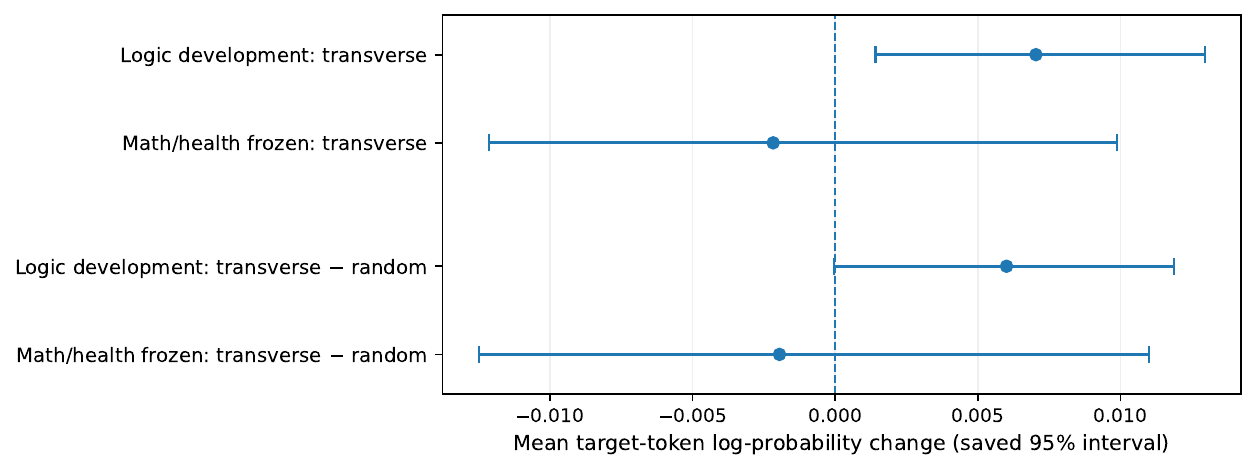}
\caption{Logic development versus frozen math/health mean-effect tests. Points and intervals reproduce saved analysis summaries, not a new bootstrap. Logic uses 25 base items; math/health uses an equal-domain macro estimate over 82 bases. The original positive-mean and specificity claims do not pass the held-domain test. This figure is not a direct statistical test of the difference between the two cohorts.}
\label{fig:monitor-effects}
\end{figure}

\subsection{Signed attribution and finite-step patch effects}
\label{sec:monitor-signed}
The signed local account asks whether different examples are pushed in predictably different directions, rather than whether every domain has a positive mean. On the completed math/health outputs, base-level Spearman correlation between $J_{\trans,\mathrm{seq}}$ and the actual transverse $\alpha=1$ patch effect is 0.993985 on math and 0.933318 on health. Their equal-domain Fisher aggregate is 0.979806 [0.940296, 0.999776]. This close derivative-to-intervention relation is distinct from the more moderate 0.433554 relation to $U_{\sema}$.

The total-position patch has a separate association with $U_{\sema}$: 0.416541 on math and 0.761880 on health, with aggregate 0.618202 [0.356109, 0.815947]. \Cref{fig:monitor-links} distinguishes the three comparisons. Agreement with a patch that uses the same reference, response, and positions is a localized causal validation; it is not equivalent to predicting the entire cue-prompt contrast. Strong agreement can reflect local smoothness as well as properties of the chosen direction.

\begin{figure}[htbp]
\centering\includegraphics[width=\linewidth]{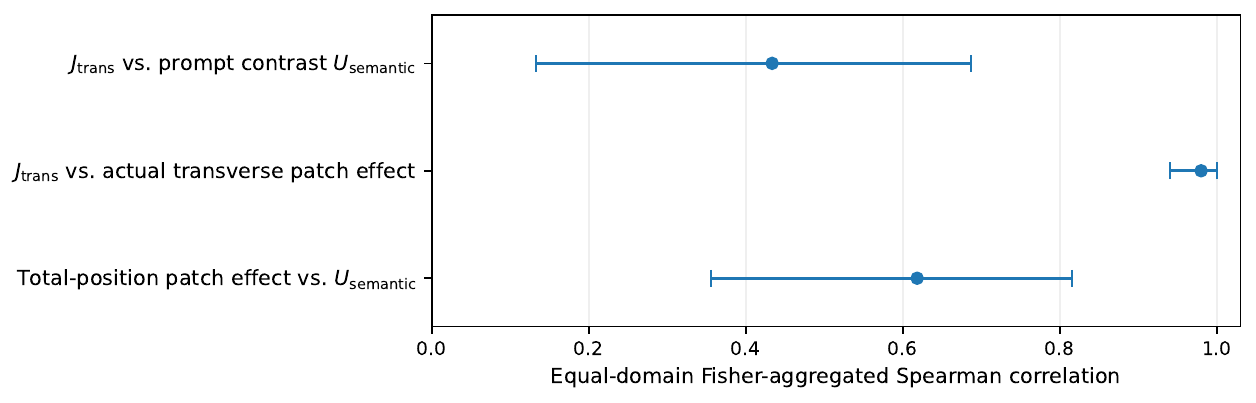}
\caption{Three distinct links in the fixed-CoT analysis. Values are equal-domain Fisher aggregates of base-level Spearman correlations, with saved domain-stratified 95\% intervals. Only the attribution--prompt-contrast relation is the frozen observational primary. Attribution--patch agreement and total-patch--prompt-contrast association answer different causal-scope questions and must not be reported interchangeably.}
\label{fig:monitor-links}
\end{figure}

A subsequent signed analysis uses
\begin{equation}
 \widehat s_i=\operatorname{sign}(\bar J_{\trans,i}),\qquad
 s_i=\operatorname{sign}(\bar C_{\trans,i}(1)).
 \label{eq:signrule}
\end{equation}
The rule correctly predicts 19/20 math-base signs and 54/62 health-base signs, or 73/82 overall (\cref{tab:signed}). The pooled accuracy is 89.02\% with saved bootstrap 95\% interval [81.71\%, 95.12\%]. Equal-domain macro accuracy is 91.05\% [83.63\%, 96.77\%]; pooled balanced accuracy is 89.93\%, and pooled Matthews correlation is 0.7694. The actual signs are not balanced: 27 bases have positive effects and 55 have negative effects, motivating the additional balanced metric rather than accuracy alone.

\begin{table}[htbp]
\centering\small
\caption{Post hoc prediction of finite transverse patch-effect signs on nondisclosed bases. These are not hidden-reasoning detection scores. The independent unit is the base, after averaging its continuations. ``Positive/negative'' counts describe the actual patch effects.}
\label{tab:signed}
\begin{tabular}{@{}lrrrrr@{}}
\toprule
Domain & Bases & Positive/negative & Correct signs & Accuracy & Balanced acc.\\
\midrule
Math &20&10/10&19/20&95.00\%&95.00\%\\
Health &62&17/45&54/62&87.10\%&87.45\%\\
Pooled &82&27/55&73/82&89.02\%&89.93\%\\
\bottomrule
\end{tabular}
\end{table}

This refinement is descriptive. The sign threshold is zero, but choosing signed-effect accuracy after seeing the mean-effect failure is a change of inferential target. It does not retroactively pass the frozen test. Values close to zero also require numerical-stability checks in a replication; the reported zero-threshold analysis did not introduce a new dead zone or exclude small effects retrospectively. A Gemma L38--45 signed replication has been prepared, but no completed effect estimates are included here.

\subsection{What the second axis adds to the paper}
On CoT labeled nondisclosed, the analysis provides a measured prompt contrast, a response-aligned internal attribution, and causal responses to specified endpoint patches. The association with the prompt contrast transfers to the two tested domains, while the directional derivative closely tracks finite patch responses of either sign. Thus the second axis contributes more than another failure-rate benchmark: it identifies which \emph{kind} of internal influence is measurable when explicit acknowledgement is absent, and distinguishes that question from a universal increase in target probability.

The conclusion remains conditional on fixed-CoT replay and the chosen lexical reference. We have not shown that a monitor would fail on those same examples, that geometry outperforms a textual monitor, or that every nondisclosed continuation contains hidden cue use. No effect-size threshold or labeled hidden-reasoning detection task was evaluated. The supported claim is narrower and testable: direction-sensitive geometry can help characterize localized, signed readout influence in a nondisclosed-CoT cohort. It complements the answer-fragility axis without treating its outcome as another correct-to-incorrect flip.

\section{Related Work}
\label{sec:related}
\panel{Linguistic robustness and representation change.}
RUPBench supplies heterogeneous perturbation families \citep{wang2024rupbench}, while latent adversarial paraphrasing connects prompt robustness with representation movement \citep{fu2025same}. Work on hidden instability in vision--language models distinguishes stable answers from stable internal representations \citep{wani2026same}. Our semantic-fragility axis asks why some movement changes an answer while other movement is tolerated, and tests the information omitted by scalar gain. None of the internal analyses replaces semantic validation of the input pairs.

\panel{CoT monitorability and nondisclosed influences.}
MonitorBench studies chain-of-thought monitorability \citep{wang2026monitorbench}. Our second axis uses its cue-answer setting but distinguishes a disclosure label from fixed-continuation cue influence and from a localized causal effect. The exact-target lexical contrast, target-position attribution, and endpoint patching define additional analysis outcomes. They are not benchmark monitor scores or evidence that private reasoning has been recovered. The relation to monitorability is the gap between what the recorded rationale acknowledges and what controlled readout measurements can establish.

\panel{Causal localization and activation steering.}
Causal tracing of linguistic realization choices shows that controlled rewrites can change judgments and that patching effects can concentrate at particular layers and positions \citep{huang2026realization}; late localization alone is not our novelty claim. Activation-patching conclusions depend on design \citep{zhang2024activation}, and activation steering uses contrastive directions to alter behavior \citep{stolfo2025steering}. We distinguish fixed-scalar endpoint orientation swaps, entry-geometry-matched mirrors with unpatched downstream readouts, and signed target-position patches. These are diagnostic experiments, not one interchangeable steering method or a deployment-ready defense.

\panel{Margins and local sensitivity.}
Classical robustness analyses relate margins to sensitivity and can derive certified bounds \citep{tsuzuku2018lipschitz,muthukumar2023local}. Our observed gain ratio is not a Lipschitz bound. Likewise, the directional-derivative identity is elementary calculus rather than a novel theorem about transformers. The empirical questions are where direction-sensitive effects occur, how they transfer, and how far a local derivative predicts a finite intervention. The new finite-replay experiment tests one of these limits directly: numerical path integration improves a specified local patch approximation, while the larger natural-prompt discrepancy remains. The two research axes answer different parts of these questions.

\section{Discussion: What Connects Fragility and Monitorability?}
\label{sec:discussion}
\subsection{A common sequence of questions, not a common error label}
The first axis starts from an answer failure and asks how it arises. The second starts from absent cue acknowledgement and asks what influence can nevertheless be measured under a specified replay. They share a sequence of questions: how do paired representations differ, how does that difference change through a window, and what consequence does its direction have for the chosen response? They do not share semantic equivalence assumptions, reference prompts, response variables, or statistical labels.

This distinction is the conceptual role of the hidden-CoT axis. An answer label summarizes a decision; a rationale is a text that may or may not attribute its sources. Neither substitutes for a controlled account of internal influence. The additional measurements do not reveal a complete hidden reasoning process, but they make a narrower aspect of monitorability accessible to intervention. The common framework is therefore broader than a catalogue of answer-flipping rewrites and narrower than a universal theory of covert reasoning.

\subsection{Why the sign refinement matters for both axes}
The initial fragility account emphasized insufficient contraction relative to clean reserve. Natural trajectories, fixed-scalar orientation swaps, and the Gemma mirror result show why that description is incomplete. The CoT axis makes the omitted sign especially clear: a transverse component can raise or lower the specified target score. Its magnitude cannot encode whether either change is desirable or whether a gold-answer boundary is crossed.

The failed math/health positive-mean criterion must remain a negative result. A near-zero mean with heterogeneous signs is consistent with an accurate signed predictor, but that mathematical possibility does not itself establish such a predictor; the reported finite-patch comparisons provide the separate empirical evidence. Because the signed target was adopted after the failed confirmation, it remains a refined hypothesis awaiting independent replication. This separation prevents a sequence of hypothesis changes from being presented as one successful preregistered theory.

\subsection{What is shared across tasks and models}
There are three levels of convergence. The longitudinal--transverse decomposition provides common \emph{measurement coordinates}. Held-family prediction and frozen ReClor evaluation support useful \emph{predictive relationships}, though not uniform calibration or scalar sufficiency. Matched interventions support \emph{causal sensitivity} to specified directions. These levels do not identify the same circuit, learned operator, or hidden-state basis across architectures.

In particular, the CoT result is not zero-shot transfer of the MMLU logistic predictor: it uses a response-specific gradient and a different input contrast. The inherited Qwen window links the experiments without establishing that it is optimal for all reasoning settings. Gemma supplies semantic-fragility prediction and entry-intervention evidence, but no completed CoT signed-effect replication in this draft. ``Shared'' consequently refers to the experimental description that survives these distinctions, not to evidence that every setting has the same failure mechanism.

\subsection{Local directional information is not an invariant failure vector}
The new component studies sharpen the semantic claim in three ways. First, they retain an unsuccessful all-pair validation: restoring a component can reduce endpoint displacement without a demonstrable average answer benefit. Second, a failure-conditioned rule informed by that result is frozen before evaluation on a disjoint set of original items. The positive result is evidence for that refined target, rather than retrospective success of the initial estimand. Third, matched same-letter donors test a concrete alternative to input-specific directional information. Their weaker effects show that sharing the gold answer letter does not account for the own-direction advantage under the tested component geometry.

The order manipulation also exposes an important boundary. Most affected rewrites fail in only one layout, and each intervention uses the clean reference for its current layout. Repeated success of recomputed paired directions therefore differs from transfer of an unchanged vector. Same-letter donors do not eliminate every answer-readout mechanism, and the larger stale-reference intervention cannot be treated as a size-matched contrast. The evidence supports context-dependent local directional information; stable generation of failure through one layout-invariant direction remains unestablished. Nor do local response fits close the gap between a finite state patch and a natural input change.

\subsection{From a diagnostic to a monitor}
A usable hidden-influence monitor would require a defined detection target, suitable negative cases, a decision threshold, and evaluation of false positives and false negatives under realistic access constraints. Our sign task has a different target: the direction of a particular patch effect on an already selected nondisclosed cohort. Its 89.02\% accuracy cannot be converted into a monitor's detection accuracy. A high derivative-to-patch correlation also does not prove superiority over reasoning-text monitoring; no such head-to-head comparison is supplied.

The present methods require paired reference computations and internal activations. The new semantic gradient features additionally use gold-defined readout gradients; the CoT axis uses readout gradients and fixed continuations. Even the near-perfect reference-assisted semantic ranking is consequently not a clean-free detection result. They are controlled diagnostics rather than a deployable system. Their value is to separate claims that surface observations conflate: failure versus stability, disclosure versus influence, and localized response versus total prompt effect. The two axes strengthen that separation without erasing the experiments' limitations.

\section{Limitations}
\label{sec:limitations}
\panel{Stimulus validity and different tasks.}
The strongest rewrite analyses concern MMLU and ReClor multiple-choice questions. Published families and unchanged labels do not certify semantic preservation. The historical audit is retrospective, overlaps fitting data, and uses preliminary AI-assisted labels; ReClor is AI-screen-passed rather than human-certified. In the new acquisition the generator and reviewer share a checkpoint; no independent blinded semantic audit is included in the completed results. In the CoT axis, $S$ and $L$ deliberately change the role of the cue and are not semantic-equivalence interventions. The two outcomes must not be pooled as instances of the same error.

\panel{Chronology, selection, and precision.}
Historical windows were selected outside nested prediction-fold selection. Qwen's endpoint parameterization was developed after its initial ReClor result, with fitting restricted to historical MMLU. The historical ReClor transverse-swap intervention reuses stimuli and has seven primary bidirectional items; the MMLU equal-dose cohort has 17. The new component rule is informed by a negative all-pair test, then frozen for a disjoint 250-item cohort. That cohort has only 21 failure items; its reused option-order analysis has 23. These totals overlap across families, and all per-family failure counts are below the frozen twenty-item interpretive threshold. Cyclic orders and donor draws do not create independent original items. Mixed-outcome and feasibility-selected cohorts do not estimate population-average effects. For CoT, logic is development data, math/health fail the original causal success rule, and signed-effect accuracy is post hoc. Nondisclosure-conditioned continuations are not an unselected population sample.

\panel{Fixed-CoT inference is not hidden-reasoning recovery.}
Blind disclosure labels are operational text judgments, not direct labels of cue reliance or intent. Teacher-forced target scoring holds the recorded CoT fixed but changes its internal processing with the prompt. It does not establish what caused the original rationale, predict free-running answer changes, or demonstrate that an external monitor was fooled. The target need not be correct, and positive influence is not synonymous with harm. Only localized patch-effect sign prediction is quantified; no hidden-CoT detector is evaluated.

\panel{Partial states, nonlinear response, and numerical precision.}
Most semantic interventions patch only the final prompt-token state; CoT patches affect only specified target-prediction positions. Neither isolates a unique mediator of the whole prompt effect. Finite-dose longitudinal and transverse effects need not add, and local gradient agreement can reflect smoothness of a closely matched response. bfloat16 matching is approximate. The new component controls match update and resulting component norms, not every full-window scalar; recorded norm checks do not imply equally small vector errors. Joint numerical feasibility avoids outcome-selective edits but restricts coverage; zero feasible ReClor magnitude groups mean the comparison is unavailable, not null. Near-zero sign stability remains a replication consideration.

\panel{Cross-model coverage and applications.}
Gemma has semantic entry-contraction, empirical-direction, and mirror results, not an identical replication of every Qwen intervention. Its separate CoT signed-effect study has no completed result included here. MASC's retained exploratory procedure labels use static judging without execution, and behavior-blind search is limited to Qwen with an automatic semantic gate. These additional applications do not supply independent confirmation of the second axis.

\panel{Evidence recovery.}
This revision integrates completed experimental records; it performs no additional target-model inference. New finite-replay, component, fresh-item and order-control archives are directly available, with saved-output checks and independent fresh/order reanalyses. Some historical values remain inherited from the supplied manuscript. Some historical intervention values are retained from prior summaries because their original archives were not re-openable. The logic primary summary and current cross-domain/sign summaries are directly readable; the differing evidence levels are recorded in \cref{app:provenance}. Reorganizing the paper does not resolve those raw-data audit requirements or turn engineering tests into scientific results.

\section{Conclusion}
\label{sec:conclusion}
Heterogeneous input changes admit useful shared descriptions without sharing a universal causal direction. Clean reserve, displacement, and local gain rank semantic failures across families; matched interventions show that scalar summaries omit consequential orientation. New finite-replay checks improve local quantitative description through path integration while retaining a substantial gap to the full natural input effect. The initial component validation gives no clear all-cohort benefit. A revised rule frozen before new original items repairs a subset of failures, and expanded order controls show an own-direction advantage over matched random and same-letter donor directions.

The resulting semantic claim is local and context-dependent. Most failing rewrites fail in only one tested option order, and current-order repair requires a newly computed clean reference. Shared predictive relationships and repeated directional sensitivity therefore do not yet establish one order-invariant failure vector, a clean-free detector, or a general repair method. In the complementary fixed-CoT study, the readout-relative association transfers across domains, the frozen positive-mean causal criterion fails, and signed-effect prediction remains post hoc. Together, these results motivate a direction-sensitive account tied to explicit references, responses, and intervention conditions, with independent confirmation and broader transport still required.

\clearpage
\bibliographystyle{plainnat}
\bibliography{references}
\clearpage
\appendix
\section{Historical Family and Depth Statistics}
\label{app:depth}
\begin{table}[htbp]
\centering\small
\caption{Historical failure rates by family on model-specific clean-correct items. Each retained base has all six variants. These published transformations are not all independently verified semantic equivalents.}
\begin{tabular}{lrr}
\toprule
Family & Qwen3-32B & Gemma 4 31B\\
\midrule
Checklist&7.02\%&7.24\%\\
Compound&5.01\%&4.91\%\\
It-cleft&9.52\%&9.81\%\\
Red-herring&7.77\%&5.37\%\\
Stress&7.27\%&5.84\%\\
Wh-cleft&11.78\%&10.05\%\\
\bottomrule
\end{tabular}
\end{table}
\begin{table}[htbp]
\centering\small
\caption{Historical strict mean LOFO AUROC by probe endpoint. ``Compact'' uses clean margin, the preceding displacement, and local gain. The first probe has no local-gain predecessor.}
\begin{tabular}{llccc}
\toprule
Model & Layer & Margin + current $D$ & Margin + trajectory & Compact\\
\midrule
Qwen&32&0.870&0.869&--\\
&40&0.869&0.869&0.869\\
&48&0.874&0.875&0.877\\
&50&0.885&0.900&0.906\\
&52&0.895&0.921&0.923\\
&54&0.918&0.949&0.953\\
&56&0.918&0.948&0.916\\
\midrule
Gemma&30&0.790&0.794&--\\
&38&0.837&0.847&0.846\\
&45&0.899&0.910&0.910\\
&47&0.897&0.909&0.898\\
&49&0.896&0.908&0.894\\
&51&0.893&0.906&0.897\\
&52&0.886&0.907&0.897\\
\bottomrule
\end{tabular}
\end{table}

\section{Earlier Trajectory Diagnostics and Causal Details}
\label{app:historical-details}
The first-order boundary ratio at 50\% Qwen depth obtains AUROC 0.683. The archived $R>1$ crossing rate is zero in five families and 0.25\% in it-cleft. Mid-layer displacement alone gives 0.462 AUROC, while clean margin gives 0.869. Adding observed mid-to-final gain raises the three-variable diagnostic to 0.907.

A denser trajectory experiment probes 12.5, 25, 37.5, 50, 62.5, 75, and 87.5\% depth. The pre-final trajectory score reaches AUROC 0.937. Ridge prediction of late log transmission from the earlier trajectory gives mean held-family correlation 0.800 and $R^2=0.610$; a two-stage failure model using the predicted transmission reaches AUROC 0.922. These are theory-development diagnostics, not additional independent confirmations of the final selected window.

\begin{table}[htbp]
\centering\small
\caption{Historical Qwen failure-side contraction slopes and absolute-norm layer details. Slope intervals are archived bootstrap 95\% intervals. The layer experiment's binary late-window contrasts are not significant under its small fixed update budget.}
\begin{tabular}{lrr}
\toprule
Family & Contraction slope & 95\% interval\\
\midrule
Checklist&4.04&[2.39, 6.09]\\
Compound&3.03&[2.38, 3.66]\\
It-cleft&5.47&[4.21, 6.87]\\
Red-herring&4.54&[3.29, 5.92]\\
Stress&4.71&[3.51, 5.92]\\
Wh-cleft&5.59&[4.41, 6.93]\\
\bottomrule
\end{tabular}
\par\smallskip
\begin{tabular}{rrrr}
\toprule
Layer & Margin improvement & Rescue & Mean fraction corrected\\
\midrule
40&0.023&2.78\%&0.500\\
48&0.038&1.39\%&0.339\\
50&0.155&4.17\%&0.244\\
52&0.269&8.33\%&0.178\\
54&0.465&9.72\%&0.116\\
56&0.417&6.94\%&0.089\\
\bottomrule
\end{tabular}
\end{table}

The original direction-control run uses 72 failed variants from 52 base items, with eight random and eight orthogonal draws per variant. Its baseline replay has no reported mismatches. The absolute-norm layer run uses the same 72-case design. Baseline agreement checks and small realized norm discrepancies support the implementation of these interventions; they do not remove the scientific selection and reference-state limitations discussed in the main text.

\section{Context Dependence of a Single-Position Probe}
\label{app:context}
The geometry is exact in the chosen residual coordinates, but a single position is not generally a complete state. Let $v$ be the patched last-token state at endpoint $b$, and let $K^C$ and $K^T$ stand for the remaining clean and transformed computation states or context needed by the suffix. Write a scalar suffix response as $\psi_b(v;K)$. The natural response difference admits the exact telescoping decomposition
\begin{align}
 \psi_b(h_b^T;K^T)-\psi_b(h_b^C;K^C)
  ={}&\big[\psi_b(h_b^T;K^T)-\psi_b(h_b^C;K^T)\big]\nonumber\\
    &+\big[\psi_b(h_b^C;K^T)-\psi_b(h_b^C;K^C)\big].
 \label{eq:contexttelescoping}
\end{align}
The first bracket is a single-position intervention contrast with the transformed context held fixed. The second is a remaining-context contrast at a shared last-token state. Where the required hybrid computation is defined and smooth, the first bracket can be locally expanded as $g^\top\delta_b+\mathcal R$. Substituting \cref{eq:decomp} then yields longitudinal and transverse first-order terms, plus a nonlinear remainder. Neither the context term nor that remainder is assumed to vanish in this paper.

This decomposition formalizes a limitation; it is not an empirical attribution of all residual effects to a named pathway. It also explains why full replacement of one position need not recover the clean answer. Multiple-choice minimum margin is piecewise defined by competing options; local per-option gradients do not themselves certify preservation of every margin under a finite edit.

\section{Magnitude Factorial Estimands}
\label{app:factorial}
For the natural-dose four-cell design, let $m_{00}$ denote the receiver scalars, $m_{10}$ the donor longitudinal scalar only, $m_{01}$ the donor transverse magnitude only, and $m_{11}$ both donor scalars, with receiver orientation fixed. A standard factorial accounting would report
\begin{align}
 E_q&=\tfrac12[(m_{10}-m_{00})+(m_{11}-m_{01})],\\
 E_\rho&=\tfrac12[(m_{01}-m_{00})+(m_{11}-m_{10})],\\
 E_{q\rho}&=m_{11}-m_{10}-m_{01}+m_{00}.
\end{align}
These define finite-intervention contrasts, not intrinsic model derivatives or a decomposition of natural causal responsibility. Numerical four-cell feasibility and any aggregation must be established for this design itself. No missing natural-dose interaction estimate is reconstructed from the equal-dose central responses. In particular, the reported 17-item equal-dose result does not establish that all corresponding natural-dose factorial cells were valid, and ReClor's zero feasible magnitude groups cannot yield any of these effects.

\section{MonitorBench Outcome and Estimand Checklist}
\label{app:monitor-estimands}
\begin{table}[htbp]
\centering\small
\caption{Distinct comparisons in the fixed-CoT monitorability axis. All displayed aggregate correlations use equal-domain Fisher aggregation of base-level Spearman correlations. These quantities must not be interchanged in an abstract or figure.}
\begin{tabularx}{\linewidth}{@{}XrX@{}}
\toprule
Comparison & Estimate & Interpretation\\
\midrule
$J_{\trans,\mathrm{seq}}$ versus $U_{\sema}$ &0.4336&Frozen observational association; criterion passed.\\
$J_{\trans,\mathrm{seq}}$ versus actual transverse patch effect &0.9798&Signed local attribution-to-intervention agreement.\\
Total patch effect versus $U_{\sema}$ &0.6182&Association of a different intervention with the prompt contrast.\\
Mean transverse $\alpha=1$ patch effect &$-0.002178$&Frozen positive-mean causal criterion failed.\\
Sign-rule pooled accuracy &89.02\%&Post hoc signed patch-effect prediction, not hidden-reasoning detection.\\
\bottomrule
\end{tabularx}
\end{table}

At the base level, the signed classifier's pooled confusion counts are 48 true negatives, 7 false positives, 2 false negatives, and 25 true positives. There are 27 positive and 55 negative actual patch effects. The math counts are 9, 1, 0, and 10; the health counts are 39, 6, 2, and 15. A zero sign threshold does not eliminate the post hoc choice of this analysis target. Values very close to zero also require attention to numerical stability when the rule is replicated.

\section{Prospective Search Beyond a Fixed Perturbation Taxonomy}
\label{sec:search}\label{app:search}
The historical experiments ask whether geometry organizes known perturbation failures. A separate Qwen search tests whether the frozen compact score remains useful before candidate answer outcomes are observed. The result is positive for instance-level enrichment but negative for discovering a new reusable linguistic family.

\subsection{Candidate-behavior-blind protocol}
The historical raw-feature score, imputation and standardization statistics, coefficients, L52--54 window, checkpoint, prompt format, and non-thinking configuration are fixed before behavior reveal. The discovery cohort contains 241 clean-correct MMLU questions absent from the historical fitting data. Their clean outcomes and margins are known; \emph{candidate} correctness is not used in generation, semantic gating, geometric scoring, beam selection, or lineage inspection.

A broad natural-rewrite generator is not instructed to produce the six historical named families. Candidates are gated for task-relevant content preservation, naturalness, omitted information, and new answer hints. The gate does not see candidate behavior or geometry. Within each item, the initial beam contains four candidates. Each of three recursive rounds requests up to six local descendants per beam parent, then gates and scores the descendants before retaining four. Two low-risk initial candidates per item are frozen as controls. The completed blind archive contains 25,691 scored candidates.

Surface-operation descriptions of blind lineages are clustered independently of hidden states and failure labels. Before answer reveal, a checkpoint decides whether any cluster supports a coherent, reusable rule distinct from the historical families. The archive, reveal set, and any optional rule are frozen by hash. All three rounds were completed and the manifest records no candidate behavior used in search. No family rule was frozen.

\subsection{Failure enrichment after frozen reveal}
The reveal set contains 4,335 unique rewrites with 648 failures (14.95\%). Three rewrites occupy two selection memberships, giving 4,338 memberships in \cref{tab:search}. The final beam fails on 20.02\% of candidates, compared with 2.49\% of low-risk controls: an absolute difference of 17.53 percentage points and relative enrichment of 8.04 times. The archived 5,000-resample item-cluster interval for the absolute difference is approximately [13.5, 21.7] percentage points.

\begin{table}[htbp]
\centering\small
\caption{Candidate outcomes revealed only after search freeze. Counts are selection memberships, not independent observations. Three duplicate memberships account for the difference between 4,338 memberships and 4,335 unique candidates.}
\label{tab:search}
\begin{tabular}{lrrr}
\toprule
Frozen selection group & Memberships & Failures & Failure rate\\
\midrule
Low-risk controls &482&12&2.49\%\\
Initial beam (round 0)&964&121&12.55\%\\
Beam after round 1&964&151&15.66\%\\
Beam after round 2&964&174&18.05\%\\
Final beam (round 3)&964&193&20.02\%\\
\bottomrule
\end{tabular}
\end{table}

The increase from initial to final beam is 7.47 percentage points, with archived item-bootstrap interval approximately [4.5, 10.6]. The last-round increment is smaller and its interval includes zero. This supports improvement over the whole search trajectory, not a separately significant gain at every round.

Across frozen-risk bins $[0,.05)$, $[.05,.10)$, $[.10,.25)$, $[.25,.50)$, $[.50,.75)$, and $[.75,1]$, observed failure rates are 0.36\%, 5.11\%, 11.21\%, 19.28\%, 39.54\%, and 57.40\%. The score thus orders previously unevaluated candidates along a steep failure gradient. Since selection intentionally enriches both risk extremes, these aggregate numbers are not calibrated prevalence estimates for an unselected rewrite population.

\begin{table}[htbp]
\centering\small
\caption{Prospective candidate ranking after outcome reveal. Predictor values were frozen before candidate correctness was known. The final beam is a restricted comparison after all three search rounds.}
\label{tab:searchauc}
\begin{tabular}{lrr}
\toprule
Predictor & Full frozen set AUROC & Final beam AUROC\\
\midrule
Frozen $(M,D_{52},G_{52\to54,\epsilon})$ score &0.905&0.886\\
Incoming displacement &0.797&0.772\\
Local gain &0.776&0.748\\
Small clean margin ($-M$)&0.740&0.709\\
Absolute token-length change &0.535&0.531\\
\bottomrule
\end{tabular}
\end{table}

The joint score remains informative within the final beam (\cref{tab:searchauc}), where all candidates have undergone the same search. Length change is close to chance. Compared with non-failures, revealed failures have lower mean clean margin (6.08 versus 10.78), larger incoming displacement (89.96 versus 62.03), and higher local gain (1.798 versus 1.580). These associations support the frozen search criterion; they do not establish the scalar score as causally sufficient.

\subsection{No new perturbation family was discovered}
Blind lineage analysis yields 12 surface clusters, but no clean cross-item rule. The strongest cluster enrichment is only approximately 1.08 times; leading clusters are content-specific or mix synonyms, voice changes, clarification, and generic restructuring. The family specification was therefore left empty before reveal. This run has no valid held-out family-versus-neutral confirmation experiment, and its successful candidates cannot now be grouped post hoc and called a prospectively discovered seventh family.

The split outcome matters. A frozen internal score can guide search toward fragile instances even when those instances do not collapse into a single surface category. That is consistent with a failure boundary cutting across linguistic forms, but it is also compatible with limitations of the proposal distribution or surface-clustering method. The experiment cannot distinguish those explanations. It remains Qwen-specific and relies on automatic semantic gating rather than exhaustive independent human annotation.

\section{Exploratory Procedure Steering in Code Generation}\label{app:masc}
Saved Gemma 4 31B generations from the MASC study \citep{hu2026masc} test whether a benign metaphorical skill shifts a model from its usual efficient procedure toward a lower-efficiency target procedure. Five no-skill generations define a procedure reserve
\begin{equation}
 M_{\mathrm{proc}}=\log\frac{n_{\mathrm{efficient}}+0.5}{n_{\mathrm{target}}+0.5}.
 \label{eq:proc}
\end{equation}
The primary window is fixed from Gemma's MMLU analysis at L38--45. The dataset has 77 records, 20 positive under the broad procedure-shift label. Mean gain is 0.889 for positives and 0.860 for negatives; gain alone obtains AUROC 0.668 with one-sided Mann--Whitney $p=0.013$. A fixed $(M_{\mathrm{proc}},D_{38},G_{38\to45,\epsilon})$ model gives AUROC 0.591 versus 0.386 for reserve alone.

The sample is small, calibration is poor, and labels come from offline static LLM judging rather than program execution or a replay of the original MASC judge. This is suggestive cross-behavior evidence only. It does not establish that generated programs remain correct, that runtime increases, or that the MMLU scalar law transfers without qualification.

\section{Reproducibility and Evidence Status}
\label{app:provenance}
The immediate editing basis is the supplied 35-page manuscript, \path{HeterPeturb__Zhibo_Hu_.pdf}. Its matching earlier LaTeX package supplies editable body sections, bibliography, and historical plot assets. The visible title and conceptual figure are retained from the supplied PDF. The abstract, introduction, chronology, discussion, limitations, and conclusion are revised to incorporate only the completed semantic-fragility experiments documented in the present study sequence. Existing MonitorBench methods and results are retained without introducing an additional research framework. This editorial revision performs no new target-model inference.

\panel{Directly available new evidence.}
The finite replay experiment is read from its complete 707-case archive and analysis summaries, including 601 completed cases and 106 clean-incorrect skips. Window tracing, component discovery/validation, and the failure-conditioned follow-up are read from their saved outputs, frozen plans, and diagnostics. The new-item validation and option-order controls are additionally checked against independent saved-logit reanalyses. The accompanying evidence summaries record paths, denominators, estimates, and uncertainty settings. The effect tables use archived estimates; differences and percentages explicitly described in the text are arithmetic summaries, not new model experiments.

The finite-patch table consistently uses the archived realized one-point-error interval. A separately bootstrapped version of the numerically identical error gives slightly different endpoints; the two are not mixed. The new-item experiment fixes the clean worst competitor for margin effects, whereas the option-order experiment uses minimum gold margin. They are deliberately not pooled. Strict rescue requires a positive minimum margin, so argmax-correct boundary ties are not counted as rescues. Bootstrap resampling keeps observations from an original item together, including repeated layouts and donor draws.

The source package includes the new finite-replay, component, follow-up, new-item, and order-control scalar summaries and their evidence ledger. Shared result archives omit large activation caches; saved-output reanalysis therefore does not independently remeasure every vector-level acceptance check. Recorded numerical checks are distinguished from a rerun with activations. No synthetic engineering test is presented as a pretrained-model scientific result. The prepared direction-transport implementation has no completed target-model result in this revision, and unfilled semantic-review templates are not evidence of an independent review.

\panel{Retained historical evidence.}
Historical MMLU prediction, Qwen endpoint dose, direction, and layer tests, transverse swaps, the original seven-item ReClor swap replication, Gemma entry/mirror results, prospective search, MASC, and fixed-CoT results are retained from the supplied manuscript. The inherited Qwen natural-window CSV and MonitorBench scalar files remain in the source package. Some older intervention figures were originally recovered from result summaries rather than complete per-condition archives. This revision does not recast them as newly recomputed results; exact historical cohort IDs, complete-plan checks, interval settings, numerical residuals, and natural-dose factorial coverage remain subject to reconciliation with their original records.

Two older bootstrap contrasts are not conflated with the mean-LOFO differences in \cref{tab:historical}. The preceding record reports Qwen improvement 0.097 [0.064,0.133] and Gemma improvement 0.145 [0.100,0.204], whereas subtracting the mean-LOFO columns gives 0.084 and 0.123. The available manuscript does not establish their aggregation mapping. We retain the specified mean-LOFO metrics without silently changing the bootstrap estimand. Likewise, older MMLU swap and equal-dose ranges retain their uncertainty qualifications; the historical ReClor protocol explicitly supplies nominal 97.5\% intervals.

The fixed-CoT positive-mean criterion remains failed, and its 73/82 signed-effect result remains post hoc. No unfinished Gemma CoT study is counted as confirmation. The prospective-search result still establishes instance-level enrichment without discovering a new reusable linguistic family. The unavailable original ReClor magnitude comparison remains unavailable rather than null. The present component evidence addresses a distinct, narrower set of interventions and does not replace these historical outcomes.

\section{Supplementary Reference-Assisted Readout Checks}
\label{app:reference_checks}
The finite-replay study also constructs directions aligned with a local readout gradient and directions orthogonal to it. These are calibration controls for response sensitivity, not naturally generated rewrite directions. At the fixed update radius $0.1\norm{d_b}$, 601 groups were requested, but only 65 groups from 56 original items passed joint bfloat16 norm and direction tolerances; 536 failed numerical eligibility. No dose enlargement or outcome-based seed replacement was used.

The aligned signed central response is 2.3828, with nominal 95\% original-item interval [2.2466,2.5352], while the orthogonal response is 0.0123 [$-0.0067$,0.0318]. The aligned-minus-absolute-orthogonal contrast is 2.3292 [2.1886,2.4710]. These results confirm sensitivity to directions deliberately constructed from the same readout gradient within the feasible cohort. They do not establish that natural rewrites follow those directions, and the low numerical coverage prevents treating them as representative of all 601 pairs. Passing update-norm and cosine checks also does not imply an equally small full-vector relative error.

The reference-assisted ranking in \cref{sec:finite_reclor} uses held-out-family and original-item exclusion for score fitting. That split does not remove the inference-time access to the clean computation, gold-defined margins, and gradients. Accordingly, near-perfect ranking with these features is reported as a controlled diagnostic. Neither these checks nor the component repairs constitute a detector evaluated without clean references.

\end{document}